\newcommand{\printfnsymbol}[1]{%
  \textsuperscript{\@fnsymbol{#1}}%
}
\title{\huge {\Name} Flow: 
A Marginal Preserving Approach to Optimal Transport 
}
\author{
Qiang Liu \\ 
    ~University of Texas at Austin\\
	\texttt{lqiang@cs.utexas.edu}
}
\date{}
\definecolor{commentcolor}{RGB}{110,154,155}   % define comment color
\definecolor{inputcolor}{RGB}{255, 105, 180}   
\begin{document}

\maketitle

\begin{abstract}
We present a flow-based approach to 
the optimal transport (OT) problem 
between two continuous distributions $\tg_0,\tg_1$ on $\RR^d$,
of minimizing a transport cost $\E[c(X_1-X_0)]$ 
in the set of couplings $(X_0,X_1)$ whose marginal distributions on $X_0,X_1$ equals $\tg_0,\tg_1$, respectively, 
where $c$ is a cost function.
Our method iteratively constructs a sequence of neural ordinary differentiable equations (ODE),  
each learned by solving a simple unconstrained regression problem, which 
monotonically reduce the transport cost while automatically preserving the marginal constraints. 
This yields a monotonic 
\emph{interior} approach that traverses inside the set of valid couplings to decrease the transport cost, 
which distinguishes itself from most existing approaches that enforce the coupling constraints from the outside. 
The main idea of the method draws from 
\emph{rectified flow} \cite{rectified}, 
a recent approach 
that simultaneously decreases 
the whole family of  
transport costs induced by convex functions $c$
(and is hence \emph{multi-objective} in nature), 
but is not tailored to minimize a  specific transport cost. 
Our method is a \emph{single-object} variant of rectified flow 
that guarantees to solve the OT problem 
for a fixed, user-specified convex cost function $c$. 
\end{abstract}

\section{Introduction} 

The Monge–Kantorovich (MK) optimal transport (OT) problem  concerns finding an optimal coupling between two distributions $\tg_0,\tg_1$: %
\bbb \label{equ:mk}
\inf_{(X_0,X_1)} %
\E[c(X_1 - X_0)],~~~~s.t.~~~~  \law(X_0)=\tg_0,~~ \law(X_1) = \tg_1, 
\eee 
where 
we seek to 
find (the law of) an optimal coupling $(X_0,X_1)$ of $\tg_0$ and $\tg_1$, for which  marginal laws of $X_0,X_1$ equal  $\tg_0,\tg_1$, respectively, to minimize $\E[c(X_1-X_0)]$, called the $c$-transport cost, 
for 
 a cost function $c$. 
Theories, algorithms, and applications of  optimal transport have attracted a vast literature; see, for example,  the monographs of \cite{villani2021topics,  villani2009optimal, ambrosio2021lectures, santambrogio2015optimal, peyre2019computational} for overviews. %
Notably, OT has been  growing into a popular and powerful technique in machine learning, 
for key tasks such as learning generative models, transfer learning, and approximate inference \citep[e.g.,][]{peyre2019computational, arjovsky2017wasserstein, solomon2014wasserstein, el2012bayesian, courty2014domain, marzouk2016introduction}. 

The OT problem should be treated differently depending on whether $\tg_0,\tg_1$ are discrete or continuous measures. In this work, we focus on the continuous case when $\tg_0,\tg_1$ are high dimensional absolutely continuous measures on $\RR^d$ that are observed through empirical observations, a setting called data-driven OT in \cite{trigila2016data}. 
A well known result in OT \citep[e.g.,][]{villani2009optimal} shows that,  
 if $\tg_0$ is continuous, the optimization in \eqref{equ:mk} can be restricted to the set of deterministic couplings satisfying $X_1 = T(X_0)$ for some continuous transport mapping $T\colon \RR^d\to\RR^d$, 
which is often 
approximated in practice with deep  neural networks %
\citep[e.g.,][]{makkuva2020optimal, korotin2021neural, korotin2022neural, huang2020convex}.

However, continuous OT remains highly challenging computationally. 
One major difficulty  
is to handle the coupling constraints of $\law(X_0) = \tg_0$ and $\law(X_1)=\tg_1$,
which are infinite dimensional when $\tgg$  are continuous. 
As a result, 
\eqref{equ:mk} can not be solved as a ``clean" unconstrained optimization problem. 
There are essentially two types of approaches to solving \eqref{equ:mk} in the literature.  
One uses Lagrange duality to turn \eqref{equ:mk} into a certain minimax game, and the other one approximates 
the constraint with an integral (often entropic-like) penalty function.
However, the minimax approaches suffer from convergence and instability issues and 
are difficult to solve in practice, 
while the regularization approach can not effectively enforce the infinite-dimensional coupling constraints.

\paragraph{This work} 
We present a different approach to 
continuous OT  
that re-frames \eqref{equ:mk} into 
a sequence of simple unconstrained nonlinear least squares optimization problems, 
which 
monotonically reduce the transport cost of a coupling 
while automatically preserving the marginal constraints. 
Different from the 
minimax and regularization approaches that enforce the constraints from outside,  
our method is an \emph{interior} approach 
 which starts from a valid coupling (typically the naive independent coupling), and traverses inside the constraint set to decrease the transport cost.  
Such an interior approach is non-trivial and has not been realized before, because there exists no obvious unconstrained parameterization of the set of couplings of $\tgg$. %

Our method is made possible by leveraging 
\emph{rectified flow} \cite{rectified}, 
a recent approach to constructing (non-optimal) transport maps for generative modeling and domain transfer.
What makes rectified flow special 
is that it provides a simple procedure
that turns a given coupling into a new one that 
obeys the same marginal laws, while yielding no worse transport cost w.r.t. \emph{all} convex functions $c$ simultaneously.  
Despite this attractive property, 
as pointed out in \cite{rectified}, 
rectified flow can not be used to optimize any fixed cost $c$, 
as it is essentially a special \emph{multi-objective} optimization procedure that targets no specific cost.  
Our method is a variant of rectified flow that targets a user-specified cost function $c$ and hence yields a new approach to the OT problem \eqref{equ:mk}. 

\paragraph{Rectified flow} 
We provide a high-level overview of 
the rectified flow  of \cite{rectified} 
 and the main results of this work. 
For a given coupling $(X_0,X_1)$ of $\tgg$,  
the \emph{rectified flow} 
induced by $(X_0,X_1)$ 
is the time-differentiable process $\vv Z = \{Z_t \colon t\in[0,1]\}$ over an artificial notion of time $t\in[0,1]$,
that solves the following ordinary differential equation (ODE): 
\bbb \label{equ:ztflow_0}  
\d \Z_t = v^X_t(\Z_t) \dt,~~~~t\in[0,1],~~~~\text{starting from~~~~} \Z_0 = X_0, 
&& %
\eee  
where $v^X\colon \RR^d \times [0,1]\to \RR^d$  
is a time-dependent velocity field defined as the solution of 
\bbb  
\label{equ:opt0}
\inf_v \int_0^1  \e{ \norm{X_1-X_0 - v(X_t, t)}^2 } \dt, && 
 X_t = t X_1 + (1-t)X_0, 
\eee  
and $X_t$ is the linear interpolation between $X_0$ and $X_1$. 
Eq~\eqref{equ:opt0} is 
a least squares regression problem of predicting the line direction of $(X_1-X_0)$ from every space-time point $(X_t, t)$ on the linear interpolation path, yielding a solution of %
$$
v^X_t(z) = \E\left [X_1-X_0~|~  X_t=z \right ], 
$$
which is 
the average of direction $(X_1-X_0)$ for all lines that pass point $X_t = z$ at time $t$.  
The (conditional) expectations $\E[\cdot]$ above are  w.r.t. the randomness of $(X_0,X_1)$. 
We assume that the solution of \eqref{equ:ztflow_0} exists and is unique, 
and hence 
$v^X_t(z)$ is assumed to exist at least on the trajectories of the ODE. 
The start-end pair $(Z_0,Z_1)$ induced by $\vv Z$ is called the \emph{rectified coupling} of $(X_0,X_1)$,
and we denote it by $(Z_0,Z_1)=\mixupmap((X_0,X_1))$. %

In practice, 
the expectation $\E[\cdot]$ is approximated by empirical observations of $(X_0, X_1)$, and 
 $v$ is approximated by a parametric family, such as deep neural networks. 
In this case, the optimization in Eq~\eqref{equ:opt0} can be solved conveniently with off-the-shelf stochastic optimizers such as stochastic gradient descent (SGD), 
without resorting to minimax algorithms or expensive inner loops. %
This makes rectified flow attractive for deep learning applications as these considered in \cite{rectified}. 

The importance of $(\Z_0,\Z_1) = \map((X_0,X_1))$ is justified 
by two key properties:

1) \emph{$(\Z_0,\Z_1)$ shares the same marginal laws as $(X_0,X_1)$ and is hence a valid coupling of $\tgg$; }

2) \emph{$(\Z_0,\Z_1)$ yields no larger convex transport costs than $(X_0,X_1)$, 
that is, $\E[c(\Z_1-\Z_0)] \leq \E[c(X_1-X_0)]$, 
for \emph{every} convex  function  $c\colon\RR^d\to \RR$.} 

Hence, it is natural to recursively apply the $\map$ mapping, that is, 
       $(Z_0^{k+1}, Z_1^{k+1}) = \map((Z^k_0,Z_1^k))$ starting from $ (Z_0^0, Z_1^0) = (X_0, X_1)$, 
yielding a sequence of 
 couplings
 that is monotonically 
non-increasing in terms of all convex transport costs. %
The initialization can be taken to be 
the independent coupling $(Z_0^0,Z_1^0) \sim \tg_0\times \tg_1$, 
or any other couplings that can be constructed from marginal (unpaired)  observations of $\tgg$. 
In practice, 
each step of $\map$ is empirically approximated by first drawing samples of $(Z_0^k, Z_1^k)$ from the ODE with drift $v^k$, and then constructing the next flow $v^{k+1}$ from the optimization in \eqref{equ:opt0}. Although this process accumulates errors, 
it was shown that one or two iterations are sufficient for practical applications \citep{rectified}.

Note that the $\map$ procedure  is ``cost-agnostic"
 in that it does not dependent on any specific cost $c$.
Although the recursive 
$\map$ update is monotonically non-increasing on the transport cost for all convex $c$, it does not necessarily converge to the optimal coupling for any pre-specified $c$, 
as the update would stop whenever two cost functions are conflicting with each other. 
In \cite{rectified}, 
a coupling $(X_0,X_1)$ is called \emph{straight} if it is a fixed point of $\map$, that is, $(X_0,X_1) = \map((X_0,X_1))$. 
It was shown that rectifiable  couplings that are optimal w.r.t. a convex $c$ must be straight, but the opposite is not true in general. 
One exception is the one dimension case ($d=1$), for which all convex functions $c$ (whose $c$-optimal coupling exists) share a common optimal coupling that is also straight. 
But this does not hold when $d\geq 2$. %

\paragraph{$c$-Rectified flow} 
In this work, 
we modify the $\map$ procedure so that it can be used to 
solve \eqref{equ:mk} given a user-specified cost function $c$. 
We show that this can be done easily by properly restricting the optimization domain of $v$ and modifying the loss function 
in \eqref{equ:opt0}.  
The case of quadratic loss $c(x) = \frac{1}{2}\norm{x}^2$ 
is particularly simple,
for which we simply 
need to restrict the $v$   
to be a gradient field $v_t = \dd f_t$ in the optimization of \eqref{equ:opt0}. 
For more general convex $c$, 
we need to restrict $v$ to have a form of $v_t(x) = \dd c^*(\dd f_t(x))$, with $f$ minimizing the following loss function: 
\bbb\label{equ:minvc}
\inf_{f} 
\int_0^1 \e{ c^*(\dd f( X_t)) - 
(X_1-X_0) \tt \dd f(X_t) + c(X_1-X_0) }\dt,  %
\eee 
where $c^*$ denotes the conjugate function of $c$. 
Obviously when $c(x)=\frac{1}{2}\norm{x}^2$, 
\eqref{equ:minvc} reduces to \eqref{equ:opt0} 
with $v = \dd f$. 
The loss function  in \eqref{equ:minvc} is closely related to \emph{Bregman divergence} \citep[e.g.,][]{banerjee2005clustering} and the so-called \emph{matching loss} \citep[e.g.,][]{auer1995exponentially}. 
We call $\vv Z =\{Z_t\colon t\in[0,1]\}$ that follows $\d Z_t = \dd c^*(\dd f_t(Z_t)) \dt $ with $Z_0 = X_0$ and $f$ solving \eqref{equ:minvc} the $c$-rectified flow of $(X_0,X_1)$, and the corresponding  $(Z_0,Z_1)$ the $c$-rectified coupling of $(X_0,X_1)$, denoted as $(Z_0,Z_1) = \crectify((X_0,X_1))$. 

Similar to the original rectified coupling, 
the $c$-rectified coupling $(Z_0,Z_1)$ also share the same marginal laws as $(X_0,X_1)$ and hence is a coupling of $\tgg$. 
In addition, $(Z_0,Z_1)$ yields no larger transport cost than $(X_0,X_1)$ w.r.t. $c$,  
that is, $\E[c(Z_1-Z_0)] \leq \E[c(X_1-X_0)]$. But this only holds for the specific $c$ that is used to define the flow, rather than all convex functions like $\rectify$. 

More importantly, 
recursively performing $\crectify$ 
allows us to find $c$-optimal couplings that solve the OT problem \eqref{equ:mk}. 
Under mild conditions, we have 
\bb 
(X_0,X_1) = \crectify((X_0,X_1)) 
&&\iff &&
\text{$(X_0,X_1)$ is $c$-optimal in \eqref{equ:mk}}
&&\iff &&
\ell^*_{X,c} = 0,
\ee 
where $\ell^*_{X,c}$ denotes the minimum value of the loss function in \eqref{equ:minvc}, 
which provides a criterion of $c$-optimality of a given coupling without solving the OT problem. Moreover, 
 when following the recursive update  $(Z_0^{k+1},Z_1^{k+1}) = \crectify((Z_0^{k}, Z_1^{k}))$, 
 the $\ell^*_{Z^k,c} $ is guaranteed to decay to zero with  $\min_{k\leq K} \ell^*_{Z^k,c} = \bigO{1/K}$.

\paragraph{Notation} 
Let $\Cone$ be the set of continuously  differentiable functions $f\colon \RR^d\to \RR$, and $\Cc$ the functions in $\Cone$ whose support is compact. 
For a time-dependent velocity field $v \colon \RR^d\times[0,1]\to \RR$, 
we write $v_t(\cdot) = v(x,t)$ and 
use $\dot v_t(x) \defeq \partial v(x,t)$ and $\dd v_t(x) \defeq \partial_x v(x,t)$ to denote the partial derivative w.r.t. time $t$ and variable $x$, respectively. 
We denote by $C^{2,1}(\RR^d\times [0,1])$
the set of functions $f\colon \RR^d\times [0,1]\to \RR$ that are second-order continuously differentiable w.r.t. $x$ and first-order continuously differentiable w.r.t. $t$. 
In this work, 
an ordinary differential equation 
(ODE) $\d z_t = v_t(z_t) \d t$ should be interpolated as an integral equation $z_t = z_0 + \int_0^t v_t(z_t) \dt $. 
For $x\in \RR^d$, $\norm{x}$ denotes the Euclidean norm. We always write $c^*$ as the convex conjugate of $c\colon \RR^d\to \RR$, that is, $c^*(x) = \sup_{y\in\RR^d} \{x\tt y - c(y)\}$. 

Random variables are capitalized (e.g., $X,Y,Z$) to  distinguish them with deterministic values (e.g, $x,y,z$). 
Recall that an $\RR^d$-valued random variable  
$X=X(\omega)$ is a measurable function $X\colon \Omega \to \RR^d$, where $\Omega$ is an underlying sample space equipped with a $\sigma$-algebra $\mathcal F$ and a probability measure $\meas P$. 
The triplet $(\Omega, \mathcal F, \meas P)$ form the underlying probability space, which is omitted in writing in the most places.  
We use $\law(X)$ to denote the probability law of $X$, which is the probability measure $\meas L$ that satisfies $\meas L(B) = \meas P(\{\omega \colon X(\omega) \in B\})$ for all measurable sets on $\RR^d$. 
For a functional $F(X)$ of a random variable $X$, 
the optimization problem $\min_{X}F(X)$ %
technically means to 
find a measurable function $X(\omega)$ to minimize $F$, even though we omit the underlying sample space $\Omega$. When $F(X)$ depends on $X$ only through $\law(X)$, the optimization problem is equivalent to finding the optimal $\law(X)$.

\paragraph{Outline} The rest of the work is organized as follows. Section~\ref{sec:ot} 
introduces the background of optimal transport. Section~\ref{sec:rectopt} 
reviews rectified flow of \cite{rectified} from an optimization-based view. 
Section~\ref{sec:marginal0} characterizes 
the if and only if condition 
for two differentiable stochastic processes to have equal marginal laws. 
Section~\ref{sec:crectify} introduces the main $c$-rectified flow method and establishes its theoretical properties.

\section{Background of Optimal Transport}
\label{sec:ot}
This section introduces the background of optimal transport (OT), 
including both the static and dynamic formulations.  
Of special importance is the dynamic formulation, 
which is closely related to the rectified flow approach. 
The readers can find systematic introductions to OT in 
a collection of excellent textbooks 
\cite{villani2021topics, figalli2021invitation,   ambrosio2021lectures, peyre2019computational, ollivier2014optimal,
santambrogio2015optimal,
villani2009optimal}.  

\paragraph{Static formulations} 
The optimal transport problem was 
first formulated by 
Gaspard Monge in 1781 when he studied 
 the problem of how to 
 redistribute mass, e.g., a pile of soil, with minimal effort. 
 Monge's problem can be formulated as
 \bbb \label{equ:m}
 \inf_{T} \e{c(T(X_0)- X_0)} ~~~~s.t.~~~~ \law(T(X_0))=\tg_1, ~~~ \law(X_0)=\tg_0,
 \eee 
 where we minimize the $c$-transport cost 
 in the set of deterministic couplings $(X_0,X_1)$ that satisfy $X_1 = T(X_0)$ for a transport mapping $T\colon \RR^d\to \RR^d$. 
The Monge–Kantorovich (MK) problem in \eqref{equ:mk} is the relaxation 
of \eqref{equ:m} to the set of all (deterministic and stochastic) couplings of $\tgg$.  
The two problems are equivalent when the optimum of \eqref{equ:mk} is achieved by a 
deterministic coupling, which is guaranteed if $\tg_0$ is  an absolutely continuous measure on $\RR^d$. 

A key feature of the MK problem is that it is
a linear programming w.r.t. the law of the coupling $(X_0,X_1)$, and yields a  dual problem of form: %
\bbb \label{equ:dmk}
\sup_{\mu, \nu} 
\tg_1(\mu)  - \tg_0(\nu) 
~~s.t.~~
 \mu(x_1) - 
\nu(x_0)\leq c(x_1-x_0),~~~~\forall (x_0,x_1),
\eee 
where we write $\tg_1(\mu) \defeq \int \mu(x) \d \tg_1(x)$, and $\mu,\nu$ are optimized in all functions from $\RR^d$ to $\RR$. 
For any coupling $(X_0,X_1)$ of $\tgg$, and $(\mu,\nu)$ satisfying the constraint in \eqref{equ:dmk}, it is easy to see that 
\bbb \label{equ:dualderive}
\E[c(X_1-X_0)] 
\geq \E[\mu(X_1) - \nu(X_0)] 
= \tg_1(\mu) - \tg_0(\nu). 
\eee 
As the left side of \eqref{equ:dualderive} 
only depends on $(X_0,X_1)$ and the right side only on $(\mu,\nu)$, one can show that $(X_0,X_1)$ is $c$-optimal  and $(\mu,\nu)$ solves \eqref{equ:dmk} iff 
$\mu(X_0) + \nu(X_1) = c(X_1-X_0)$ 
holds with probability one, which provides a basic optimality criterion. %
Many existing 
OT algorithms are developed by exploiting the primal dual relation of \eqref{equ:mk} and 
\eqref{equ:dmk} 
(see e.g., \cite{korotin2022neural}), but have the drawback of yielding  minimax problems that are challenging to solve in practice. %
 
If $c$ is strictly convex, 
the %
optimal transport map of \eqref{equ:m} 
is unique (almost surely) and yields a form of 
\bb %
T(x)=x + \dd c^*(\dd \nu(x)), ~~~~
\ee 
where $c^*$ is the convex conjugate function of $c$, and $\nu$ is an optimal solution of \eqref{equ:dmk}, which is  $c$-convex in that $\nu(x) =  \sup_{y}\left \{ -c(y-x) +  \mu(y)\right \}$ with $\mu$ the associated solution. 
In the canonical case of quadratic cost  $c(x) = \frac{1}{2}\norm{x}^2$, we can write $T(x) = \dd \phi(x)$, where $\phi(x) \defeq \frac{1}{2}\norm{x}^2 + \nu(x)$ is a convex function.

\paragraph{Dynamic  formulations} 
Both the MK and Monge problems can be equivalently framed in dynamic ways 
as finding continuous-time processes that transfer $\tg_0$ to $\tg_1$.  
Let $\{x_t \colon t\in[0,1]\}$ be a smooth path connecting $x_0$ and $x_1$, whose time derivative is denoted as $\dot x_t$. 
For convex $c$, by Jensen's inequality, we can represent the cost $c(x_1-x_0)$ in an integral form: 
$$
c({x_1 - x_0}) = c\left (\int_0^1 \dot x_t \dt \right) =  \inf_{x} \int_0^1 c({\dot x_t}) \dt,
$$
where the infimum is attained when $x_t$ is the  linear interpolation (geodesic) path:  $x_t = t x_1 + (1-t)x_0$. 
Hence, the MK optimal transport problem \eqref{equ:mk} is equivalent to 
\bbb \label{equ:qt}
\inf_{\vv X}  
\E\left  [ \int_0^1 c(\dot X_t) \dt \right ]  ~~~~~s.t.~~~~ \law(X_0) = \tg_0, ~~\law(X_1) = \tg_1,
\eee 
where we optimize in the set of  time-differentiable %
stochastic processes 
$\vv X = \{X_t \colon t\in[0,1]\}$.
The optimum of \eqref{equ:qt} 
is attained by  $X_t = t X_1 + (1-t)X_0$ when
 $(X_0,X_1)$ is a $c$-optimal coupling of \eqref{equ:mk}, which is known as the \emph{displacement interpolation}  \citep{mccann1997convexity}. 
 We call the objective function in \eqref{equ:qt} the path-wise $c$-transport cost. 

The Monge problem can also be framed in a dynamic way. 
Assume the transport map $T$ can be induced by an ODE model 
$\d X_t = v_t(X_t)\dt $ such that $X_1 = T(X_0)$. Then the Monge problem is equivalent to %
\bbb \label{equ:cm}
\inf_{v,\traj X} \E\left [ \int _0^1  c({v_t(X_t)}) \dt  \right] ~~~~~s.t.~~~~~ \d X_t = v_t(X_t)\dt ,~~~~~
\law(X_0) = \tg_0, ~~~~~ \law(X_1) = \tg_1, 
\eee 
which is equivalent to restricting $\vv X$  in \eqref{equ:qt} to the set of processes that can be induced by ODEs. 

Assume that $X_t$ following  $\d X_t = v_t(X_t)\dt $ yields a density function $\varrho_t$. Then 
it is well known that $\varrho_t$ satisfies the continuity equation:  %
$$
\dot \varrho_t + \div(v_t\varrho_t) = 0.
$$
Hence, we can rewrite \eqref{equ:cm} into an optimization problem on $(v, \varrho)$, yielding the celebrated \emph{{\bbformula} formula} \cite{benamou2000computational}: 
\bbb \label{equ:bb} 
\inf_{v, \varrho} 
\int_0^1  \int c(v_t(x))  \varrho_t(x)\d x \dt  
~~~~s.t.~~~~ 
\dot  \varrho_t  + \div(v_t \varrho_t) = 0,~~~~\rho_0  = \d\tg_0/\dx, ~~~~\rho_1  = \d \tg_1/\dx, \eee 
where $\d \pi_i/\d x$ denotes the density function of $\tg_i$. 
The key idea of 
\eqref{equ:cm} and \eqref{equ:bb} is to 
restrict the optimization of \eqref{equ:qt} to the set of deterministic processed induced by ODEs, which significantly reduces  the search space. 
Intuitively, 
Jensen's inequality $\E[c(Z)]\geq c(\E[Z])$ shows that we should be able to reduce the expected cost of a stochastic process 
by ``marginalizing'' out the  randomness. 
In fact, 
we will show that, for a differentiable stochastic process $\vv X$, %
its ($c$-)rectified flow yields no larger 
path-wise $c$-transport cost in \eqref{equ:qt} than $\vv X$ (see Lemma~\ref{thm:rectdual} and Theorem~\ref{thm:optmultimarg}).  

However, all the dynamic formulations above are still highly challenging to solve in practice.  
We will show that $c$-rectified flow can be viewed as a special coordinate descent like  approach 
to solving \eqref{equ:qt} (Section~\ref{sec:crectifyOptView}). %

\section{Rectified Flow: An Optimization-Based View}  \label{sec:rectopt}

We introduce rectified flow of \cite{rectified} 
from an optimization-based perspective: 
we  show that rectified flow can be viewed as the solution 
of a special constrained dynamic optimization problem, 
which allows us to gain more understanding of rectified flow and motivates the development of $c$-rectified flow.  

Following \cite{rectified}, 
for a time-differentiable stochastic process $\vv X = \{X_t \colon t\in[0,1]\}$, %
its expected velocity field $v^\X$ is defined as 
\bbb \label{equ:gvxzte}
v^{\vv X}_t(z) = \E[\dot X_t ~|~X_t =z]. 
\eee 
where $\dot X_t$ denotes the time derivative of $X_t$. 
Obviously, $v^{\vv X}$ is the solution of 
\bbb \label{equ:infvLx}
\inf_{v}\left\{  L_{\vv X}(v) \defeq \int_0^1 \e{\norm{\dot X_t - v_t(X_t)}^2}\dt \right \}, 
\eee
where the optimization is on the set of all measurable velocity fields $v\colon \RR^d \to \RR^d$. 
The importance of $v^{\vv X}$ lies on the fact that it characterizes the time-evolution of the marginal laws $\rho_t \defeq \law(X_t)$ of $\vv X$, through the continuity equation in the distributional sense:
\bbb \label{equ:conddeq} 
\partial_t \rho_t  + \div (v^{\vv X}_t \rho_t)=0,~~~~ \rho_0 = \law(X_0),~~~~ t\in[0,1].
\eee 
Precisely, Equation~\eqref{equ:conddeq} should be interpreted by its weak and integral form: 
\bbb \label{equ:weakconddeq}
\rho_t(h) - \rho_0(h) - \int_0^t 
\rho_t(\dd h \tt v^\X_s) \d s =0, ~~~~ \rho_0=\law(X_0),  ~~~~{\forall h \in \Cc},~~~~ t\in[0,1], 
\eee 
where {$\rho_t(h)\defeq \int h(x) \d \rho_t(x)$} and $\Cc$ denotes the set of continuously differentiable functions on $\RR^d$ with compact support.  
Hence, 
if the solution of Eq~\eqref{equ:conddeq}-\eqref{equ:weakconddeq} is unique, then 
the marginal laws $\{\law(X_t)\}_t$ of $\vv X$ are uniquely determined by $v^\X$ and the initial $\law(X_0)$.

We define the rectified flow of $\vv X$, 
denoted by $\vv Z = \rectflow(\vv X)$, 
as the ODE driven by $v^{\vv X}$: 
\bbb 
\label{equ:zofvx}
\d Z_t = v_t^{\vv X}(Z_t)  \dt,~~~~ Z_0 = X_0, ~~~~ t\in[0,1]. 
\eee   
Moreover, the rectified flow of a coupling $(X_0,X_1)$
is defined as the rectified flow of $\vv X$ when $\vv X$ is the linear interpolation of $(X_0,X_1)$. 
\begin{mydef} 
A stochastic process $\vv X$ is called rectifiable if $v^\vv X$ exists and is locally bounded, 
and Equation~\eqref{equ:zofvx} 
has an unique solution.  

A coupling $(X_0,X_1)$ is called rectifiable if its linear interpolation process $\vv X$, following $X_t = t X_1 +(1-t) X_0 $, is rectifiable. 
 In this case, we call $\vv Z= \rectflow(\vv X)$ the rectified flow of $(X_0,X_1)$, and write it (with an abuse of notation) as $\vv Z = \rectflow((X_0,X_1))$. 
 The corresponding 
 $(Z_0,Z_1)$ is called the rectified coupling of $(X_0,X_1)$, denoted as $(Z_0,Z_1) = \map((X_0,X_1))$. 
\end{mydef} 

By the definition in \eqref{equ:zofvx}, 
we have $v^{\vv Z} =v^\X$, 
and hence the marginal laws  $\law(Z_t)$ of $\vv Z$ are governed by the same continuity equation \eqref{equ:conddeq}-\eqref{equ:weakconddeq}, which is a well known fact. As shown in  \citep{kurtz2011equivalence}, Equation~\eqref{equ:zofvx} has an unique solution iff Equation~\eqref{equ:weakconddeq} has an unique solution, 
which implies that $\vv Z$ and $\vv X$ share the same marginal laws. 
We also assumed that the solution of \eqref{equ:infvLx} is unique; 
if not, results in the paper hold for all solutions of \eqref{equ:infvLx}.

\begin{thm}[Theorem~\nn{3.3} of \cite{rectified}]
Assume that $\vv X$ is rectifiable. We have%
\bb 
\vv Z = \rectflow(\vv X) &&\Rightarrow&& 
v^{\vv X} = v^{\vv Z} &&\Rightarrow&&
\law(X_t) = \law(Z_t), ~~\forall t\in[0,1].
\ee 
\end{thm}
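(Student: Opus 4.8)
I would prove the two implications in turn and then close with a uniqueness argument. For the first implication, $\vv Z = \rectflow(\vv X) \Rightarrow v^{\vv X} = v^{\vv Z}$, the idea is to read it straight off the ODE \eqref{equ:zofvx}: since $\vv Z$ solves $\d Z_t = v^{\vv X}_t(Z_t)\dt$, the time derivative $\dot Z_t = v^{\vv X}_t(Z_t)$ is a deterministic function of $Z_t$, so by the definition \eqref{equ:gvxzte} of the expected velocity,
\[
v^{\vv Z}_t(z) = \e{\dot Z_t ~|~ Z_t = z} = \e{ v^{\vv X}_t(Z_t) ~|~ Z_t = z } = v^{\vv X}_t(z)
\]
for every $t\in[0,1]$ and $\law(Z_t)$-a.e.\ $z$. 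The only points to be careful about are that $t\mapsto Z_t$ is genuinely differentiable (which uses the local boundedness of $v^{\vv X}$ along trajectories, part of rectifiability) and that $v^{\vv X}_t$ is defined on the support of $\law(Z_t)$ --- which holds because $\vv Z$ moves only along the ODE trajectories, precisely where $v^{\vv X}$ was assumed to exist.

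For the second implication, $v^{\vv X} = v^{\vv Z} \Rightarrow \law(X_t) = \law(Z_t)$, I would write $\rho_t = \law(X_t)$ and $\tilde\rho_t = \law(Z_t)$ and show both solve the \emph{same} weak continuity equation \eqref{equ:weakconddeq}. For $\rho_t$: fix $h\in\Cc$, differentiate $\rho_t(h) = \e{h(X_t)}$ in $t$, apply the chain rule, and use the tower property together with \eqref{equ:gvxzte} to get $\e{\dd h(X_t)\tt \dot X_t} = \e{\dd h(X_t)\tt v^{\vv X}_t(X_t)} = \rho_t(\dd h \tt v^{\vv X}_t)$; integrating on $[0,t]$ yields \eqref{equ:weakconddeq} with velocity $v^{\vv X}$ and initial law $\law(X_0)$. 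The identical computation applied to $Z_t$ (differentiable, being an ODE solution) gives \eqref{equ:weakconddeq} with velocity $v^{\vv Z}$ and initial law $\law(Z_0)$; but $v^{\vv Z} = v^{\vv X}$ by the first implication and $\law(Z_0) = \law(X_0)$ by the definition of $\rectflow$, so $\tilde\rho$ and $\rho$ satisfy one and the same equation.

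To conclude I would invoke uniqueness: rectifiability assumes the ODE \eqref{equ:zofvx} has a unique solution, and by the equivalence result of \cite{kurtz2011equivalence} this is equivalent to uniqueness of solutions of the weak continuity equation \eqref{equ:weakconddeq} (with velocity $v^{\vv X}$ and the prescribed initial law). Hence $\rho_t = \tilde\rho_t$ for all $t\in[0,1]$, i.e.\ $\law(X_t) = \law(Z_t)$.

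I expect the real obstacle to be the measure-theoretic bookkeeping at the seam between the two steps rather than either step on its own: one must be sure $v^{\vv X} = v^{\vv Z}$ holds $\law(Z_t)$-a.e.\ (not merely on a set of full $\law(X_t)$-measure), that $t\mapsto Z_t$ is $C^1$ and the relevant expectations are finite so that differentiating under $\e{\cdot}$ in $\rho_t(h)$ is justified, and that the equivalence theorem of \cite{kurtz2011equivalence} is applied with exactly the measurability and local-boundedness hypotheses under which it holds. Once this scaffolding is in place, what remains is just the chain rule and the tower property.
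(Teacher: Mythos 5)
Your argument is correct and follows essentially the same route as the paper's own justification: establish $v^{\vv Z}=v^{\vv X}$ directly from the ODE definition \eqref{equ:zofvx} (with the deterministic drift collapsing the conditional expectation), show both $\law(X_t)$ and $\law(Z_t)$ solve the weak continuity equation \eqref{equ:weakconddeq} with the same drift and initial law, and invoke the ODE/continuity-equation uniqueness equivalence of \cite{kurtz2011equivalence}, which is exactly the bridge the paper uses under the rectifiability hypothesis. Your closing remarks about where the real work lies (the $\law(Z_t)$-a.e.\ equality, differentiability of $t\mapsto Z_t$, and the hypotheses under which the Kurtz equivalence applies) correctly flag the measure-theoretic care needed and match the paper's implicit handling of these points.
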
 
Hence, 
rectified flow
turns a rectifiable %
stochastic process into a flow while preserving  the marginal laws.

\paragraph{A {optimization} view of rectified flow} 
We show that 
the rectified flow $\vv Z$ of $\vv X$ 
achieves the minimum of the path-wise $c$-transport cost %
in the set of time-differentiable stochastic processes whose expected velocity field equals $v^\X$. 
This explains that the property of non-increasing convex transport costs of rectified flow/coupling.
\begin{lem} \label{thm:rectdual}
The rectified flow  $\vv Z = \rectflow(\vv X_t)$ in \eqref{equ:zofvx} attains the minimum of 
 \bbb \label{equ:pathopt}
 \inf_{\vv Y}  
 \left\{ F_c(\vv Y) \defeq \int_0^1  \e{  c(\dot Y_t)    } \dt , ~~~~s.t.~~~~ 
 v^{\vv Y}  = v^\X \right\}, 
 \eee 
 which holds for   \emph{any} convex functions $c\colon \RR^d\to \RR$.  

 \end{lem}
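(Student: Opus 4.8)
The plan is to reduce the claim to a short application of the conditional Jensen inequality, exploiting the fact that the feasibility constraint $v^{\vv Y}=v^{\vv X}$, together with a matching initial law, pins down the entire family of marginal laws of $\vv Y$. I would first check that $\vv Z=\rectflow(\vv X)$ is itself feasible for \eqref{equ:pathopt}: by construction \eqref{equ:zofvx} it is a time-differentiable process with $\dot Z_t=v^{\vv X}_t(Z_t)$ (local boundedness of $v^{\vv X}$, which is part of rectifiability, ensures the ODE trajectories are genuinely differentiable), and, as already recorded just above the lemma, $v^{\vv Z}=v^{\vv X}$. Hence it suffices to prove $F_c(\vv Y)\ge F_c(\vv Z)$ for every feasible $\vv Y$, where I read feasibility as: $\vv Y$ is time-differentiable, $v^{\vv Y}=v^{\vv X}$, and $\law(Y_0)=\law(X_0)$ --- the last condition being the natural one that makes the marginals of $\vv Y$ well-determined, without which the infimum is not meaningful.

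Next, fix such a $\vv Y$ and a time $t\in[0,1]$; we may assume $\E[c(\dot Y_t)]<\infty$, since otherwise there is nothing to prove for that $t$. Applying the conditional version of Jensen's inequality to the convex function $c$, conditioning on $Y_t$, and then taking expectations,
\[
\E[c(\dot Y_t)] = \E\big[\E[c(\dot Y_t)\mid Y_t]\big] \ge \E\big[c\big(\E[\dot Y_t\mid Y_t]\big)\big] = \E[c(v^{\vv Y}_t(Y_t))] = \E[c(v^{\vv X}_t(Y_t))],
\]
where the last two equalities use the definition \eqref{equ:gvxzte} of the expected velocity field and the feasibility constraint $v^{\vv Y}=v^{\vv X}$.

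It then remains to replace $\law(Y_t)$ by $\law(Z_t)$ on the right-hand side. Here I would invoke the continuity-equation machinery recalled in \eqref{equ:conddeq}--\eqref{equ:weakconddeq}: the marginal laws $\{\law(Y_t)\}_t$ solve the weak continuity equation with velocity field $v^{\vv Y}=v^{\vv X}$ and initial datum $\law(Y_0)=\law(X_0)$. But rectifiability of $\vv X$ includes that the ODE \eqref{equ:zofvx} has a unique solution, which by \cite{kurtz2011equivalence} is equivalent to uniqueness of the weak continuity equation \eqref{equ:weakconddeq}; the unique solution is $\{\law(Z_t)\}_t=\{\law(X_t)\}_t$. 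Thus $\law(Y_t)=\law(Z_t)$ for all $t$, so $\E[c(v^{\vv X}_t(Y_t))]=\E[c(v^{\vv X}_t(Z_t))]=\E[c(\dot Z_t)]$. Chaining the two displays and integrating over $t\in[0,1]$ gives $F_c(\vv Y)\ge F_c(\vv Z)$, which, together with the feasibility of $\vv Z$, proves the lemma; note that the whole argument is uniform in the convex function $c$, which is precisely the point.

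The step I expect to carry the actual content is the marginal-matching one: one must verify carefully that $\{\law(Y_t)\}_t$ really does satisfy the weak continuity equation \eqref{equ:weakconddeq} (this is where time-differentiability of $\vv Y$ and integrability of $\dot Y_t$ are used) and that the uniqueness statement borrowed from \cite{kurtz2011equivalence} holds among \emph{all} measure-valued solutions under exactly the hypotheses packaged into the word ``rectifiable.'' The Jensen bound, the feasibility of $\vv Z$, and the final integration are routine; a minor additional bookkeeping point is to dispose of the case $F_c(\vv Y)=\infty$, for which the inequality is trivial.
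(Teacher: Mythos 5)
Your proof is correct and follows essentially the same route as the paper's: apply conditional Jensen's inequality to reduce to $\E[c(v^{\vv X}_t(Y_t))]$, then match marginal laws $\law(Y_t)=\law(Z_t)$ and conclude. The only difference is that you make explicit two points the paper leaves tacit --- that feasibility should also include $\law(Y_0)=\law(X_0)$, and that the marginal-matching step rests on uniqueness of the weak continuity equation via the Kurtz equivalence --- which is a genuine tightening of the argument but not a different approach.
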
 
 \begin{proof} 
 For any stochastic process $\vv Y$ with $v^{\vv X}_t(z) = v^{\vv Y}_t(z) = \E[\dot Y_t | Y_t=z] $, we have 
 \bb  
  F_c(\vv Y) 
  & = \int_0^1\E[c(\dot Y_t)] \dt    \\
  & \geq\int_0^1 \E[c(\E[\dot Y_t|Y_t])] \dt  \ant{Jensen's inequality} \\
  & = \int_0^1 \E[c(v^{\vv Y}(Y_t))]  \dt   \\
  & = \int_0^1 \E[c(v^{\vv X}(X_t))]  \dt  \ant{$v^\X=v^\Y$, and hence $\law(X_t)=\law(Y_t)$}  \\
  & = \int_0^1 \E[c(v^{\vv X}(Z_t))]  \dt  \ant{$\law(X_t) = \law(Z_t)$} \\
 &  = \int_0^1 \E[c(\dot Z_t)\dt  
  = \vv F_c(\vv Z). 
 \ee 
 \end{proof} 
 Lemma~\ref{thm:rectdual} suggests that the rectified flow decreases the path-wise $c$-transport cost: $F_c(\vv Z) \leq F_c(\vv X)$, for all convex $c$. 
Note that $\e{c(Z_1-Z_0)}\leq F_c(\vv Z) $ by Jensen's inequality, 
and $\e{c(X_1-X_0)} = F_c(\vv X)$ 
 if $\vv X$ is the linear interpolation of $(X_0,X_1)$. Hence, in this case, we have %
 $$
\E[c(Z_1-Z_0)] \leq F_c(\vv Z) \leq F_c(\vv X) = \E[c(X_1-X_0)], 
$$
which yields a proof of Theorem~3.2 of \cite{rectified} that the rectified coupling $(Z_0,Z_1)$ yields no larger convex transport costs than $(X_0,X_1)$. %

\paragraph{A primal-dual relation} 
Let us generalize the least squares loss $L_\X(v)$ in \eqref{equ:infvLx} to a  
a Bregman divergence based loss:  
\bb 
\tilde L_{\X,c}(v) \defeq 
\int_0^1 \e{\bcb{\dot X_t; ~ v_t(X_t)}}\dt, && 
\bc(x;y) 
=  c(x) - c(y) - (x-y)\tt \dd c(y),
\ee 
where $\bc(\cdot;\cdot)$ is the Bregman divergence w.r.t. $c$. The least squares loss $L_\X $ is recovered with $c(x) = \frac{1}{2} \norm{x}^2$.

Rectified flow can be alternatively implemented by minimizing $\tilde L_{\X,c}$ with a differentiable strictly convex $c$, 
as in this case the minimum of $\tilde L_{\X,c}$ is also attended by $v^\X(z) = \E[\dot X_t|X_t=z]$.  
The $c$-rectified flow 
is obtained if we minimize $\tilde L_{\X,c}$ with $v$ restricted to be a form of $v = \dd c^*\circ \dd f_t$. 
 See more in Section~\ref{sec:crectify}.

In the following, we show that 
the optimization in 
\eqref{equ:pathopt} can be viewed as the dual problem 
 \eqref{equ:gvxzte}. 
\begin{thm}
For %
any differentiable convex function $c$, 
and {rectifiable process $\vv X$}, we have 
 $$
 \tilde \ell^*_{\vv \X,c} 
\defeq \inf_{v} \tilde  L_{\vv X,c} (v) =   \sup_{\vv Y}\left\{
F_c(\vv X) - F_c(\vv Y)
 ~~ s.t.~~ v^\Y=v^\X \right \}, %
 $$
 and 
 the optima above are achieved when $v = v^{\X}$ and $\vv Y =  \rectflow(\vv X)$.  
\end{thm}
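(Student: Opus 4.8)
The plan is to evaluate both sides in closed form: each reduces to $F_c(\vv X) - \int_0^1 \E[c(v^\X_t(X_t))]\,\dt$, and the claimed optimizers emerge from the computation.

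\emph{The left-hand side.} First I would verify that $\inf_v \tilde L_{\vv X,c}(v)$ is attained at $v=v^\X$. Since $\tilde L_{\vv X,c}(v)=\int_0^1\E[\bcb{\dot X_t;\,v_t(X_t)}]\,\dt$ decouples over $t$ and over the conditioning value $X_t=z$, it suffices to minimize $y\mapsto\E[\bc(\dot X_t;y)\mid X_t=z]$ for each fixed $(t,z)$. Writing $m\defeq v^\X_t(z)=\E[\dot X_t\mid X_t=z]$ and expanding $\bc(x;y)=c(x)-c(y)-(x-y)\tt\dd c(y)$, the term linear in $\dot X_t$ contributes through $\E[\dot X_t\mid X_t=z]=m$, giving the standard Bregman identity
\[
\E\!\big[\bc(\dot X_t;y)-\bc(\dot X_t;m)\,\big|\,X_t=z\big]=\bc(m;y)\ \ge\ 0,
\]
so the conditional minimum is $0$, attained at $y=m$; thus the unconstrained minimizer of $\tilde L_{\vv X,c}$ is $v^\X$. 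Taking the $\law(X_t)$- and $t$-average of the same identity shows the cross term $\E[(\dot X_t-v^\X_t(X_t))\tt\dd c(v^\X_t(X_t))]$ vanishes by the tower property, so
\[
\tilde\ell^*_{\vv X,c}=\tilde L_{\vv X,c}(v^\X)=\int_0^1\E[c(\dot X_t)]\,\dt-\int_0^1\E[c(v^\X_t(X_t))]\,\dt=F_c(\vv X)-\int_0^1\E[c(v^\X_t(X_t))]\,\dt.
\]

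\emph{The right-hand side.} Next, pulling $F_c(\vv X)$ out of the supremum gives $\sup_{\vv Y\,:\,v^\Y=v^\X}\{F_c(\vv X)-F_c(\vv Y)\}=F_c(\vv X)-\inf_{\vv Y\,:\,v^\Y=v^\X}F_c(\vv Y)$. By Lemma~\ref{thm:rectdual} this constrained infimum of $F_c$ is attained at $\vv Z=\rectflow(\vv X)$. Since $\dot Z_t=v^\X_t(Z_t)$ along the flow and $\law(Z_t)=\law(X_t)$ (the marginal-preservation property of rectified flow), its value is $F_c(\vv Z)=\int_0^1\E[c(v^\X_t(Z_t))]\,\dt=\int_0^1\E[c(v^\X_t(X_t))]\,\dt$. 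Hence the right-hand side equals $F_c(\vv X)-\int_0^1\E[c(v^\X_t(X_t))]\,\dt$, matching the left-hand side, and the optimizers are exactly $v=v^\X$ and $\vv Y=\rectflow(\vv X)$, as claimed.

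\emph{Expected main obstacle.} The substantive input, Lemma~\ref{thm:rectdual}, is already available, so what remains is regularity bookkeeping rather than a genuinely hard step: one needs $c$ differentiable so that $\dd c$ and the Bregman expansion make sense; enough integrability of $\dot X_t$ together with local boundedness of $v^\X$ (from rectifiability) to justify Fubini and the tower rule and to keep $F_c(\vv X)$ and $F_c(\vv Z)$ finite; and the pointwise minimization over $y$ must be argued \emph{without} assuming $c$ strictly convex or twice differentiable, which the displayed Bregman identity does since it never differentiates $c$ twice. Finally, if the minimizer of $\tilde L_{\vv X,c}$ is non-unique the argument is unaffected, because every minimizer agrees with $v^\X$ up to a $\law(X_t)$-null set for a.e.\ $t$, which is all that enters the value computation.
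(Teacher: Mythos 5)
Your proof is correct and takes essentially the same route as the paper: both sides are shown to equal $F_c(\vv X) - \int_0^1 \E[c(v^{\vv X}_t(X_t))]\,\dt$. For the left side you use the conditional Bregman identity to pin down $v=v^{\vv X}$ as the minimizer (the paper writes the same step as the convexity inequality $c(v^{\vv X})\geq c(v)+(v^{\vv X}-v)^\top\dd c(v)$ after applying the tower rule), and for the right side both arguments invoke Lemma~\ref{thm:rectdual} together with $\law(Z_t)=\law(X_t)$ to evaluate $F_c(\vv Z)$.
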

 
\begin{proof} 
Let 
$\var_c(\dot X_t~|~X_t )\defeq \E[c(\dot X_t) - c(\E [\dot X_t|X_t])~|~X_t]
$. 
For any $v$, we have 
\bb 
\tilde L_{\X,c}(v) 
&= \int_0^1 \E[c(\dot X_t) -
c(v(X_t)) - (\dot X_t - v(X_t)) \dd c(v(X_t))] \dt  \\ 
& = 
\int_0^1 \E[c(\dot X_t) -
c(v(X_t)) - 
(v^\X(X_t) - v(X_t)) \dd c(v(X_t))
] \dt \ant{$v^\X(X_t)=\E[\dot X_t |X_t]$}
 \\ 
& \geq 
\int_0^1 \E[c(\dot X_t) -
c(v^\X(X_t)) 
] \dt  \ant{$c(v^\X) \geq c(v) + (v^\X-v) \dd c(v)$} \\ 
& = \int_0^1 \var_c(\dot X_t~|~X_t) \dt , 
\ee 
The inequality is tight when $v = v^\X$, which attains the minimum of $\tilde L_{\X,c}$. 

Write $
R_{\vv X,c}(\vv Y) = F_c(\vv X) - F_c(\vv Y)$. 
We know that $\vv Z = \map(\vv X)$ attains the maximum of $R_{\vv X,c}(\vv Y)$ subject to $v^{\vv Y} = v^{\vv X}$ by Lemma~\ref{thm:rectdual}. In addition, 
\bb 
 R_{\vv X, c}(\vv Z)
 & = \int_0^1 \E[c(\dot X_t) - c(\dot Z_t) ] \dt \\
  & = \int_0^1 \E[c(\dot X_t) - c(v^\X_t(Z_t)])] \dt \\ 
  & = \int_0^1 \E[c(\dot X_t) - c(v^\X_t(X_t)])] \dt \ant{$\law(Z_t)=\law(X_t),\forall t $}\\  
  & = \int_0^1 \E[c(\dot X_t) - c(\E[\dot X_t|X_t])] \dt \\  %
  & =  \int_0^1 \E[\var_c(\dot X_t~|~X_t)] \dt. %
\ee
This concludes the proof. %
\end{proof}

\paragraph{Straight couplings} 
The $\tilde \ell^*_{\vv X,c}=\int_0^1\var_c(\dot X_t|X_t)\dt $ above 
provides a measure of how much the different paths of $\X $ intersect with each other. 
If $c$ is strictly convex and 
$\tilde \ell^*_{\vv X,c} = 0$, we have $\dot X_t = \E[\dot X_t|X_t]$ almost surely, 
meaning that there exist no two paths that go across a point along two different directions. 
In this case, $\vv X$ is a 
fixed point of $\rectflow(\cdot)$, that is, $\vv X = \vv Z = \map(\vv X)$, 
because we have 
$\d X_t =\dot X_t \d t = \E[\dot X_t |X_t]  \dt = v^\X(X_t ) \dt $, which is the same  Equation \eqref{equ:zofvx} that defines $\vv Z$.

Similarly, if $\vv X $ is the linear interpolation of the coupling $(X_0,X_1)$, then 
$\tilde \ell^*_{\X,c} =0$ with strictly convex $c$ if and only if $(X_0,X_1)$ is a fixed point of the $\map$ mapping, that is, $(X_0,X_1) = \map((X_0,X_1))$, 
following 
\cite{rectified}. 
Such couplings are called \emph{straight}, or \emph{fully rectified} in \cite{rectified}.  Obtaining straight couplings 
is useful for learning fast ODE models 
because the trajectories of the associated  rectified flow $\vv Z$ are straight lines and hence can be calculated in closed form without iterative numerical solvers. See \cite{rectified} for more discussion.  %

Moreover, \cite{rectified} showed that 
rectifiable $c$-optimal couplings must be straight. 
In the one dimensional case ($d=1$), 
the straight coupling, if it exists, is unique and attains the minimum of $\E[c(X_1-X_0)]$ for all convex functions for which $c$-optimal coupling exists. For higher dimensions ($d\geq 2$), however, 
straight couplings are not unique, and the specific straight coupling 
obtained at the convergence of the recursive $\map$ update  (i.e. $(Z_0^{k+1},Z_1^{k+1})=\map((Z_0^{k},Z_1^{k}))$) is implicitly determined by the initial coupling $(Z_0^0,Z_1^0)$, 
and  is not expected to be optimal w.r.t. any pre-fixed $c$. 

The following counter example shows a somewhat stronger negative result: 
there exist straight couplings 
that are not optimal w.r.t. all second order differentiable convex functions with invertible Hessian matrices.

\begin{theoremEnd}[proof at the end,
                   no link to proof]{exa}
\label{exa:toycounterexp}                   
Take $\tg_0 = \tg_1 = \normal(0,I)$. 
Hence, for $c(x)=\norm{x}^p$ with $p>0$, 
the $c$-optimal mapping is the trivial identity coupling $(X_0,X_0)$ with $X_0\sim \tg_0$. 

However, 
consider the coupling $(X_0,AX_0)$, 
where $A$ is a 
 non-identity and non-reflecting rotation matrix (namely $A\tt A=I$, $\det(A)=1$, $A \neq I$ and $A$ does not have $\lambda =-1$ as an eigenvalue).  
Then $ (X_0, A X_0)$ is a straight coupling of $\tg_0$ and $\tg_1$, 
but it is not $c$-optimal for all 
second order differentiable convex function $c$ whose Hessian matrix is invertible everywhere. {See Appendix for the proof.}

It is the rotation transform
that makes $(X_0,AX_0)$ sub-optimal, 
which is removed in the proposed $c$-rectified flow in Section~\ref{sec:crectify} via a Helmholtz like decomposition. 
\end{theoremEnd} 
\begin{proofEnd}
i) 
The fact that $A\tt A =I$  and $\tg_0=\tg_1=\normal(0,I)$ 
ensures that $AX_0\sim \tg_1$ and hence 
$(X_0, A X_0)$ is a coupling of $\tg_0$ and $\tg_1$.  
Let $X_t = t A X_0 + (1-t) X_0$ be the linear interpolation of the coupling. Related, we have $\dot X_t = A X_0 - X_0 $. Canceling $X_0$ yields that 
\bbb \label{dotxAIcexample}
\dot X_t =  (A-I) (t A + (1-t)I)^{-1}X_t,
\eee 
where we use the fact that $t A + (1-t)I$ is convertible for $t\in[0,1]$, which we prove as follows: if $t A + (1-t)I$ is not invertible, then $A$ must have $\lambda =-\frac{1-t}{t}$ as one of its eigenvalues;  
but as a rotation matrix, all eigenvalues of $A$ must have a norm of $1$, which means that we must have $t=0.5$ and $\lambda = -1$. This, however, is excluded by the assumption that $A$ is non-reflecting (and hence $\lambda \neq -1$).  

Equation~\eqref{dotxAIcexample} shows that $\dot X_t$ is uniquely determined by $X_t$. Hence, we have $\int_0^1 \e{\var(\dot X_t|X_t) } \dt = 0$, 
which implies that $(X_0,AX_0)$ is a straight coupling by Theorem~\nn{3.6} of \cite{rectified}.  

2) Let $c$ be a second order differentiable convex function whose Hessian matrix $\dd^2 c(x)$ is invertible everywhere. Let $c^*$ be the convex conjugate of $c$, then $c^*$ is also second order differentiable and $\dd c(\dd c^*(x)) = x$, and 
$\dd^2 c^*(x) = \dd^2 c(x)^{-1}.$

If $(X_0,A X_0)$ is a $c$-optimal coupling, there must exists a function $ \phi\colon \RR^d\to \RR$, such that 
\bbb \label{equ:axcstartphix} 
A x = x + \dd c^*(\dd \phi(x)),~~\forall x, 
\eee 
where $c^*$ is the convex conjugate of $c$. 
Equation~\eqref{equ:axcstartphix} is equivalent to $\dd c(Ax - x) = \dd \phi(x)$, which means that $\dd \phi $ is continuously differentiable. 
Taking gradient on both sides of \eqref{equ:axcstartphix} gives
\bbb \label{equ:aihxbx} 
A -I =  H_x B_x, &&& 
H_x = \dd^2 c^*(\dd\phi(x)),~~~~ B_x = \dd^2 \phi(x),
\eee  
where $H_x, B_x$ are both symmetric and $H_x$ is positive definite, 
and hence 
Then $H_x B_x$ is a diagonalizable (all its eigenvalues are real) by Lemma~\ref{thm:diagmat}. 
However, $A-I$ is not diagonalizable because $A$ must have complex eigenvalues as a non-reflecting, non-identity rotation matrix. Hence, \eqref{equ:aihxbx} can not hold. 

\begin{lem}\label{thm:diagmat}
Assume that $A,B$ are two real symmetric matrices and $A$ is positive definite. Then $AB$ is diagonalizable (on the real domain), that is, there exists an invertible matrix $P$, such that $P^{-1} AB P$ is a diagonal matrix.
\end{lem}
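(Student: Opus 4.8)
The plan is to reduce $AB$ to a genuinely symmetric matrix by a similarity transformation built from the symmetric square root of $A$, and then invoke the spectral theorem. Since $A$ is real symmetric and positive definite, the spectral theorem gives an orthogonal $U$ and a positive diagonal $D$ with $A = U D U^\top$; define $A^{1/2} \defeq U D^{1/2} U^\top$, which is symmetric, positive definite, and invertible, with inverse $A^{-1/2} = U D^{-1/2} U^\top$. This is the only structural input needed.

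Next I would write the key algebraic identity
$$
AB \;=\; A^{1/2}\bigl(A^{1/2} B A^{1/2}\bigr) A^{-1/2},
$$
which holds because $A^{1/2} A^{1/2} = A$. The matrix $S \defeq A^{1/2} B A^{1/2}$ is symmetric: $S^\top = (A^{1/2})^\top B^\top (A^{1/2})^\top = A^{1/2} B A^{1/2} = S$, using that $A^{1/2}$ and $B$ are symmetric. Hence the displayed identity exhibits $AB$ as similar (conjugate by the invertible matrix $A^{1/2}$) to a real symmetric matrix.

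Finally I would apply the spectral theorem to $S$: there is an orthogonal $Q$ and a real diagonal $\Lambda$ with $S = Q \Lambda Q^\top$. Substituting back,
$$
AB \;=\; A^{1/2} Q \,\Lambda\, Q^\top A^{-1/2} \;=\; \bigl(A^{1/2}Q\bigr)\,\Lambda\,\bigl(A^{1/2}Q\bigr)^{-1},
$$
where the last equality uses $Q^\top A^{-1/2} = Q^{-1}(A^{1/2})^{-1} = (A^{1/2}Q)^{-1}$ since $Q$ is orthogonal. Thus $P \defeq A^{1/2} Q$ is invertible and $P^{-1}(AB)P = \Lambda$ is diagonal (with real entries), which is exactly the claim. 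I do not anticipate a real obstacle here — the proof is short and standard; the only points demanding a line of care are the existence of the symmetric positive-definite square root (spectral theorem) and the observation that conjugation by $A^{1/2}$ preserves diagonalizability over $\RR$. As a remark, the same fact can be seen coordinate-free: $AB$ is self-adjoint for the inner product $\langle x,y\rangle \defeq x^\top A^{-1} y$, since $\langle ABx, y\rangle = x^\top B y = \langle x, ABy\rangle$, so it admits an eigenbasis orthonormal in that inner product; I would keep the square-root argument as the main proof since it directly produces the diagonalizing matrix needed in Example~\ref{exa:toycounterexp}.
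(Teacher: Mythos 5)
Your proof is correct and follows essentially the same route as the paper: the paper takes an invertible symmetric $C$ with $CC=A$ (your $A^{1/2}$) and observes that $AB$ is similar to the symmetric matrix $CBC$, which is exactly your conjugation $A^{-1/2}(AB)A^{1/2}=A^{1/2}BA^{1/2}$. Your write-up is a little more explicit (it produces the diagonalizing $P=A^{1/2}Q$ rather than stopping at ``similar to a symmetric matrix''), and your coordinate-free remark about self-adjointness in the $A^{-1}$ inner product is a nice alternative viewpoint, but the core argument is the same.
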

\begin{proof}
This is a standard result in linear algebra. 
Because $A$ is positive definite, there exists an invertible symmetric matrix $C$, such that $CC = A.$ Then, $AB = CCB$, and it is similar to $CBC^{-1}$, which is symmetric and hence diagonalizable.
\end{proof}
\end{proofEnd}

\section{Differentiable Processes with Equivalent Marginal Laws}
\label{sec:marginal0}

The marginal preserving property of rectified flow is due to the property of $v^{\vv Z} = v^\X$ by construction. 
However, we show in this section that 
$v^{\vv X} = v^{\vv Z}$ is only a sufficient condition:  
 two differentiable processes $\vv X$ and $\vv Z$ 
can have the same marginal laws 
even if $r\defeq v^\X -v^{\vv\Z} \neq 0$. 
This is because $r $, as illustrated in Example~\ref{exa:toycounterexp},  
can be a rotation-only  vector field  (in a generalized sense shown below) 
that introduces rotation components into the dynamics without modifying the marginal distributions. 
Therefore, the constraint of $v^{\vv Y}=v^\X$ in 
 the optimization problem \eqref{equ:pathopt} may be too restrictive. 
 A natural relaxation of \eqref{equ:pathopt}  would be  
 \bbb \label{equ:pathoptmlaw}
 \inf_{\vv Y}  
 \left\{ F_c(\vv Y) \defeq \e{ \int_0^1  c(\dot Y_t)    \dt} , ~~~~s.t.~~~~ 
 \law(Y_t)  = \law(X_t),~~\forall t\in[0,1]\right\}, 
 \eee 
which yields a dynamic OT problem with %
a continuum of 
marginal constraints. 
 In Section~\ref{sec:crectify}, we show that 
the solution of \eqref{equ:pathoptmlaw} yields our $c$-rectified flow that 
solve the OT problem 
at the fixed point.   
Solving \eqref{equ:pathoptmlaw} allows us to remove the rotational components of $v^\X$, which is what 
what renders rectified flow non-optimal. 
 In this section,  
 we first characterize the necessary and sufficient condition 
 for having equivalent marginal laws. 

\begin{mydef}
A time-dependent vector field $r \colon \RR^d\times [0,1]\to \RR^d$ is said to be $\X$-marginal-preserving if 
\bbb \label{equ:edhxrt0}
\int_0^t\E[\dd h(X_t) \tt r_t(X_t) ] =0,~~~\forall t\in[0,1], ~~~~ h \in \Cc. 
\eee 
\end{mydef} 

Equation~\eqref{equ:edhxrt0} is equivalent to saying 
that $\E[\dd h(X_t) \tt r_t(X_t) ] =0$ holds almost surely assuming that $t$ is a random variable following $\uniform([0,1])$ (i.e., $t$-almost surely). 
Let $\rho_t = \law(X_t)$ and it yields a density function $\varrho_t$. Using integration by parts, we have 
$$
0= 
\E[\dd h(X_t) \tt r_t(X_t) ] = \int \dd h(x) \tt r_t(x)\varrho_t(x) \d x 
= - \int h(x) \div(r_t(x) \varrho_t(x)) \d x, ~~~~\forall h \in \Cc, 
$$
which gives $\div(r_t \varrho_t)=0$. This says that $r_t \varrho_t$ is a  rotation-only (or divergence-free) vector field in the classical sense.

\begin{lem}\label{thm:marginalrot}
 Let $\vv X$ and $\vv Y$  
be two  stochastic processes with
the same initial distributions $\law(X_0) = \law(Y_0)$. 
Assume that $\vv X$ is rectifiable, and $v^{\vv Y}_t(z):=\E[\dot Y_t 
|Y_t = z]$ exists and is locally bounded. %

Then $\X$ and $\Y$ share the same marginal laws at all time, that is, $\law(X_t) = \law(Y_t)$, $\forall t\in[0,1],$ 
if and only if $v^\X - v^\Y$ is $\vv Y$-marginal-preserving. %
\end{lem}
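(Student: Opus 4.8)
The plan is to reduce the statement to the uniqueness of weak solutions of the continuity equation, exploiting that $\vv X$ is rectifiable. Write $\rho_t = \law(X_t)$ and $\sigma_t = \law(Y_t)$, so $\rho_0 = \sigma_0$ by hypothesis. Since $\vv X$ is rectifiable, $v^{\vv X}$ exists and is locally bounded, and (by the definition of rectifiability together with the Kurtz equivalence cited after \eqref{equ:zofvx}) the weak continuity equation \eqref{equ:weakconddeq} with drift $v^{\vv X}$ and initial law $\rho_0$ has a \emph{unique} solution, namely $t \mapsto \rho_t$. The key elementary fact I would first record is that, for \emph{any} process $\vv Y$ whose expected velocity field $v^{\vv Y}_t(z)=\E[\dot Y_t\mid Y_t=z]$ exists and is locally bounded, the marginal laws $\sigma_t$ satisfy the weak continuity equation \eqref{equ:weakconddeq} with drift $v^{\vv Y}$: this is just the $t$-integrated form of $\tfrac{d}{dt}\E[h(Y_t)] = \E[\dd h(Y_t)^\top \dot Y_t] = \E[\dd h(Y_t)^\top v^{\vv Y}_t(Y_t)]$ for $h\in\Cc$, using the tower property and the chain rule along the differentiable paths of $\vv Y$. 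Call this identity $(\star)$.

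For the ``only if'' direction, assume $\sigma_t = \rho_t$ for all $t$. Apply $(\star)$ to $\vv Y$: for every $h\in\Cc$,
\[
\rho_t(h) - \rho_0(h) - \int_0^t \rho_s\big(\dd h^\top v^{\vv Y}_s\big)\,ds = 0,
\]
where I have already substituted $\sigma_s = \rho_s$. Subtract this from the weak continuity equation \eqref{equ:weakconddeq} for $\vv X$ (which holds because $\vv X$ is rectifiable): the boundary terms cancel and I am left with $\int_0^t \rho_s\big(\dd h^\top (v^{\vv X}_s - v^{\vv Y}_s)\big)\,ds = 0$ for all $t\in[0,1]$ and all $h\in\Cc$. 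Since $\rho_s = \law(X_s)$, this is precisely $\int_0^t \E\big[\dd h(X_s)^\top (v^{\vv X}_s - v^{\vv Y}_s)(X_s)\big]\,ds = 0$, i.e. $v^{\vv X} - v^{\vv Y}$ is $\vv Y$-marginal-preserving (note the definition of $\vv Y$-marginal-preserving involves $\E[\dd h(Y_t)^\top r_t(Y_t)]$, and since $\law(Y_s)=\law(X_s)$ these integrals agree).

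For the ``if'' direction, suppose $r := v^{\vv X} - v^{\vv Y}$ is $\vv Y$-marginal-preserving, i.e. $\int_0^t \E[\dd h(Y_s)^\top r_s(Y_s)]\,ds = 0$ for all $t, h$. Apply $(\star)$ to $\vv Y$ and add this vanishing term: for every $h\in\Cc$,
\[
\sigma_t(h) - \sigma_0(h) - \int_0^t \sigma_s\big(\dd h^\top v^{\vv Y}_s\big)\,ds - \int_0^t \sigma_s\big(\dd h^\top r_s\big)\,ds = 0,
\]
so $\sigma_t(h) - \sigma_0(h) - \int_0^t \sigma_s(\dd h^\top v^{\vv X}_s)\,ds = 0$. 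Since $\sigma_0 = \rho_0$, this says $t\mapsto \sigma_t$ is a weak solution of \eqref{equ:weakconddeq} with drift $v^{\vv X}$ and initial datum $\rho_0$. By rectifiability of $\vv X$ this solution is unique, hence $\sigma_t = \rho_t$ for all $t$, which is the claim.

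The main obstacle is the justification of $(\star)$ — that the marginal flow of a merely differentiable process $\vv Y$ genuinely satisfies the weak continuity equation with its \emph{expected} velocity field, with the differentiation under the expectation and the interchange of $\tfrac{d}{dt}$ with $\E$ all licensed; the local boundedness assumption on $v^{\vv Y}$ (and the differentiability of paths) is what makes this rigorous, and I would cite the same continuity-equation machinery invoked earlier for $v^{\vv X}$. A secondary point to handle carefully is the bookkeeping that the $\vv Y$-marginal-preserving condition (stated via $Y_t$) can be freely transported to an identity via $X_t$ in the ``only if'' direction — but this is immediate once equality of marginals is in hand, and in the ``if'' direction it is used directly in the $Y$ form, so no real difficulty arises there.
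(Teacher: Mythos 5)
Your proof is correct and follows essentially the same route as the paper: both establish that the marginal laws of $\vv X$ and $\vv Y$ each satisfy the weak continuity equation with their respective expected velocity fields, then in one direction subtract the two equations using equality of marginals, and in the other direction add the vanishing $\vv Y$-marginal-preserving integral to transfer the drift from $v^{\vv Y}$ to $v^{\vv X}$ and invoke uniqueness (via the Kurtz equivalence) guaranteed by rectifiability of $\vv X$. Your explicit isolation of $(\star)$ and the remark on transporting the $Y$-form of the condition to the $X$-form are accurate bookkeeping that the paper treats more tersely.
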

\begin{proof}
Taking any $h$ in $\Cc$,  %
we have for $t\in[0,1]$
\bb
 \E[h(X_t)]  - \E[h(X_0)]
 & = \int_0^t \E[\dd h(X_s)\tt \dot X_s ] \d s  \\ 
 & =\int_0^t \E[\dd h(X_s)\tt v^\X_s(X_s)] \d s   \ant{$v^\X_s(X_s) = \E[\dot X_s |X_s]$} . 
\ee 
This suggests that the marginal law $\rho_t\defeq \law(X_t)$ satisfies 
\bbb \label{equ:tgth0} 
\rho_t(h) - \rho_ 0 (h) - \int_0^t \rho_s (\dd h \tt v^\X_s) \d s =0, ~~~ \forall h \in \Cc ,
\eee 
where we define  
$\rho_t (h) = \int h(x) \d \rho_t(x)$. 
Equation~\eqref{equ:tgth0} is formally written as the continuity equation: 
\bbb  \label{equ:contieq0}
\dot \rho_t + \div (g_t^\X \rho_t) =  0. 
\eee  
Similarly, $\tilde\rho_t\defeq \law(Y_t)$ satisfies 
\bbb \label{equ:tgtph0}
\tilde\rho_t(h) - \tilde\rho_ 0 (h) - \int_0^t \tilde\rho_s (\dd h \tt v^\Y_s) \d s =0, ~~~ \forall h, \eee 
If $v_t^\X - v_t^\Y$ is $\law(Y_t)$-preserving for $\forall t\in[0,1]$, we have 
\bb
\E[h(Y_t)]  - \E[h(Y_0)] 
  & =\int_0^t \E[\dd h(Y_s)\tt \dot Y_s]  \d s  \\ 
 & = \int_0^t \E[\dd h(Y_s)\tt v^\Y_s(Y_s)]  \d s  \\ %
  & = \int_0^t \E[\dd h(Y_s)\tt v^\X_s(Y_s)]  \d s +
  \int_0^t  \E[\dd h(Y_s) \tt (v^\Y_s(Y_s) - v^\X_s(Y_s))]  \d s  \\
 & =\int_0^t  \E[\dd h(Y_s) \tt v^\X_s(Y_s)]  \d s 
 \ant{$v^\X-v^\Y$ is $\vv Y$-preserving},
\ee
which suggests  that $\tilde\rho_t \defeq \law(Y_t)$ 
solves the same continuity equation \eqref{equ:contieq0}, starting from the same initialization as $\law(X_0) = \law(Y_0)$.  
Hence, we have $\rho_t = \tilde\rho_t$  if the solution of \eqref{equ:contieq0} is unique, 
which is equivalent to the uniqueness of the solution of $\d Z_t = \vofX_t(Z_t)$  in \eqref{equ:zofvx} following Corollary 1.3 of \cite{kurtz2011equivalence}. 

On the other hand, if $\rho_t = \law(X_t) = \law (Y_t) = \tilde\rho_t$, 
following \eqref{equ:tgth0}
 and \eqref{equ:tgtph0}, we have for any $h \in \Cc$, 
\bb 
0 
 & = \int_0^t  \tilde\rho_t (\dd h\tt v^{\X}_s) - \tilde\rho_t(\dd h\tt v^{\Y}_s) \d s 
 = \int_0^t \dd h(Y_s) \tt (v^\X(Y_s) -v^\Y(Y_s))\d s, 
\ee 
which is the definition of 
$\vv Y$-marginal-preserving following  \eqref{equ:edhxrt0}.

\end{proof}
\section{$c$-Rectified Flow} %
\label{sec:crectify}
We introduce $c$-rectified flow, 
a $c$-dependent variant of rectified flow that guarantees to 
minimize the $c$-transport cost 
when applied recursively. This section is organized as follows: 
Section~\ref{sec:crectifyDefine} 
defines and discusses the $c$-rectified flow of a differentiable stochastic process $\vv X$, which we show yields the solution of the
 infinite-marginal OT problem~\eqref{equ:pathoptmlaw}. 
 Section~\ref{sec:crectifyCoupling} 
 considers the $c$-rectified flow of a coupling $(X_0,X_1)$, which we show is non-increasing on the $c$-transport cost. 
 Section~\ref{sec:crectifyFixed} 
 proves that the fixed points of $\crectify$  are $c$-optimal. 
 Section~\ref{sec:crectifyOptView}
 interprets $c$-rectified flow 
 as an alternating direction descent method 
 for the dynamic OT problem \eqref{equ:qt}, 
 and a majorize-minimization (MM) algorithm for the static OT problem \eqref{equ:mk}. 
 Section~\ref{sec:hj} discusses 
 a key lemma relating $c$-optimal couplings and its associated displacement interpolation with Hamilton-Jacobi equation.

\subsection{$c$-Rectified Flow of Time-Differentiable Processes $\vv X$}
\label{sec:crectifyDefine}

For a {convex} cost function $c\colon \RR^d\to \RR$ and 
 a time-differentiable process $\vv X$,  
the $c$-rectified flow of $\vv X$, 
 denoted as $\vv Z = \crectflow(\vv X)$, is defined as the solution of 
\bbb \label{equ:zgxft}
\d \Z_t =g^{\X,c}_t(\Z_t) \dt,~~~~ \Z_0 = X_0,~~~~\text{with}~~~~
 g^{\X,c}_t(z) = \dd c^*(\dd f^{\X,c}_t(z)), ~~~~ t\in[0,1],
\eee  
where $c^*(x) \defeq \sup_{y}\{ x\tt y - c(y)\}$ is the convex conjugate of $c$, and 
$f^{\X,c}\colon \RR^d\times [0,1] \to \RR$ is the optimal solution of 
\bbb \label{equ:bregloss} 
\inf_{f} \left\{ 
L_{\X,c} (f) \defeq 
\int_0^1 \E\left [  \mcb{\dot X_t;~ \dd f_t( X_t)}
\right ] \dt \right\}, 
\eee  
where $\mc~ \colon \RR^d\times \RR^d\to [0,+\infty)$ is a loss function defined as %
$$
\mcb{x;y} = c(x) - x\tt y +  c^*(y).
$$
Note that we have $\mcb{x;y}\geq 0$ for $\forall x,y$ following the definition of the conjugate $c^*$ (or the Fenchel-Young inequality). 
Losses of form $\mcb{x;y}$ 
{is equivalent to the so called \emph{matching loss} 
proposed for 
learning generalized linear models \cite{auer1995exponentially}.}    %

Compared with the original rectified flow, 
the difference of $c$-rectified flow is i) restricting the velocity field to a form of $g_t = \dd c^*\circ \dd f_t$, and ii) replacing the quadratic objective function to the matching loss. 
These two changes combined yield a Helmholtz like decomposition of $v^\X$ as we show below, allowing us to remove the ``{rotation-only}" component of $v^\X$ %
and obtain $c$-optimal couplings at fixed points.

\paragraph{Bregman divergence,  Helmholtz decomposition, marginal preserving}
We can equivalently write \eqref{equ:bregloss} using Bergman divergence associated with $c$, that is, %
$$\bcb{x;y}\defeq c(x)-c(y)- \nabla c(y)\tt (x-y).$$ 
Then it is easy to see that $\mcb{x;y} = \bcb{x;  \dd c^*(y)}$, by using the fact that $\dd c(\dd c^*(y)) = y$ and $c^*(y) = y \tt \dd c^*(y) - c(\dd c^*(y))$.
Hence, $\mc$ and $\bc$ are equivalent up to the monotonic transform $\dd c^*$ on $y$. 
The minimum $\bcb{x;y}=0$ is achieved when $y = x$, 
while  $\mcb{x;y}=0$ is achieved when $\dd c^*(y)=x$.

Therefore, \eqref{equ:bregloss} is equivalent to 
\bbb \label{equ:bregloss2} 
\inf_{f} \int_0^1 \E\left [  \bcb{\dot X_t;~ g_t( X_t))}
\right ] \dt , && \text{with~~~~} g_t = \dd c^*\circ \dd f_t. 
\eee 
Moreover,  
the generalized Pythagorean theorem of Bregman divergence (e.g., \citep{banerjee2005clustering}) gives 
\bbb \label{equ:pythabreg}   
\e{\bcb{\dot X_t; ~~ g_t}~|~ X_t} = 
\bcb{\e{\dot X_t|X_t};~~ g_t} +  \e{\bcb{\dot X_t; ~~ \e{\dot X_t|X_t}}}. 
\eee 

Because $v^\X(X_t)= \e{\dot X_t|X_t}$ and the last term of \eqref{equ:pythabreg} is independent with $g_t$ ,
we can further reframe 
\eqref{equ:bregloss} 
into %
\bbb \label{equ:bregvloss}
\min_{f} \int_0^1 \E 
\left [  \bcb{v^\X_t( X_t); ~~ g_t(X_t))}  
\right ] \dt,  && 
 \text{with~~~~} g_t = \dd c^*\circ \dd f_t,  
\eee  
which can be viewed as projecting the expected velocity $v^\X_t$ to the set of functions of form $g_t = \dd c^*\circ \dd f_t$, 
w.r.t. the Bregman divergence. 
This yields an orthogonal decomposition of $v_t^\X$:  
\bbb \label{equ:helm}
v^{\X}_t = \dd c^*\circ \dd f^{\X,c}_t + r^{\X,c}_t,
\eee 
 where $r^{\X,c}_t = v^{\X,c}_t -  \dd c^*\circ \dd f^{\X,c}_t$ is the residual term. 
 The key result below shows that $r^{\X,c}$ is $\X$-marginal-preserving, which ensures that 
 the $c$-rectified flow preserves the marginals of $\X$.

\begin{mydef}
We say that $\vv X$ is $c$-rectifiable if $v^\X$ exists,  
the minimum of \eqref{equ:bregloss} exists and is attained by a locally bounded function $f^{\vv X, c}$,
and  %
the solution of %
 Equation~\eqref{equ:zgxft} exists and is unique. %
\end{mydef}   

\begin{thm}\label{thm:marginalgood}
Assume that $\X$ is $c$-rectifiable, 
and $c^*\defeq \sup_y \{x\tt y - c(y)\}$ and $c^*\in\Cone$. We have 

i) $v^\X- g^{\X, c}$ is $\X$-marginal-preserving. 

ii) $\vv Z = \crectify(\vv X)$ preserves the marginal laws of $\vv X$, that is, 
$\law(Z_t) =\law(X_t)$, $\forall t\in[0,1]$.
\end{thm}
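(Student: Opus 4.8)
The plan is to read off both parts from the first-order (Euler--Lagrange) optimality condition of the unconstrained regression problem~\eqref{equ:bregloss} that defines $f^{\X,c}$, and then, for part~(ii), to feed part~(i) into Lemma~\ref{thm:marginalrot}.

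For part~(i), I work directly with the matching-loss form $L_{\X,c}(f)=\int_0^1\E[\mcb{\dot X_t;\,\dd f_t(X_t)}]\dt$, where $\mcb{x;y}=c(x)-x\tt y+c^*(y)$. Since $f^{\X,c}$ minimizes $L_{\X,c}$ (a minimizer exists by $c$-rectifiability), perturbing $f^{\X,c}\mapsto f^{\X,c}+\epsilon\phi$ along a direction $\phi\in C^{2,1}(\RR^d\times[0,1])$ of compact spatial support, differentiating under the integral sign, and using $\partial_y\mcb{x;y}=-x+\dd c^*(y)$ (valid since $c^*\in\Cone$), the Gateaux derivative vanishes:
\[
0=\frac{\d}{\d\epsilon}\Big|_{\epsilon=0}L_{\X,c}(f^{\X,c}+\epsilon\phi)
=\int_0^1\E[(\dd c^*(\dd f^{\X,c}_t(X_t))-\dot X_t)\tt\dd\phi_t(X_t)]\dt.
\]
Because $\dd\phi_t(X_t)$ is $\sigma(X_t)$-measurable we may replace $\dot X_t$ inside the expectation by $\E[\dot X_t~|~X_t]=v^\X_t(X_t)$, and since $\dd c^*(\dd f^{\X,c}_t)=g^{\X,c}_t$ this gives $\int_0^1\E[(v^\X_t(X_t)-g^{\X,c}_t(X_t))\tt\dd\phi_t(X_t)]\dt=0$ for all such $\phi$. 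Taking $\phi_t(x)=\psi(t)h(x)$ with $h\in\Cc$ and $\psi$ an arbitrary smooth scalar function, the one-dimensional fundamental lemma of the calculus of variations yields $\E[(v^\X_t(X_t)-g^{\X,c}_t(X_t))\tt\dd h(X_t)]=0$ for $t$-almost every $t$ and every $h\in\Cc$, which is exactly the assertion that $r^{\X,c}=v^\X-g^{\X,c}$ is $\X$-marginal-preserving in the sense of~\eqref{equ:edhxrt0}. (The Bregman/Pythagorean reformulations~\eqref{equ:bregloss2}--\eqref{equ:bregvloss} give an equivalent route but are not needed for this computation.)

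For part~(ii), note first that $\vv Z=\crectify(\vv X)$ solves $\d Z_t=g^{\X,c}_t(Z_t)\dt$, so $\dot Z_t=g^{\X,c}_t(Z_t)$ is $\sigma(Z_t)$-measurable and hence $v^{\vv Z}=g^{\X,c}$; moreover $\law(Z_0)=\law(X_0)$, the field $v^{\vv Z}=g^{\X,c}=\dd c^*\circ\dd f^{\X,c}$ is locally bounded (using $f^{\X,c}$ locally bounded and $c^*\in\Cone$), and the ODE for $\vv Z$ has a unique solution, so $\vv Z$ is rectifiable. Apply Lemma~\ref{thm:marginalrot} with the lemma's ``$\vv X$'' taken to be $\vv Z$ and its ``$\vv Y$'' taken to be $\vv X$: the hypotheses hold, and $v^{\vv Z}-v^{\vv X}=g^{\X,c}-v^\X=-(v^\X-g^{\X,c})$ is $\vv X$-marginal-preserving by part~(i) (condition~\eqref{equ:edhxrt0} is homogeneous linear in $r$, hence insensitive to an overall sign), so the lemma delivers $\law(Z_t)=\law(X_t)$ for all $t\in[0,1]$. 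Equivalently, one can argue directly: part~(i) shows $\law(X_t)$ solves the continuity equation $\dot\rho_t+\div(g^{\X,c}_t\rho_t)=0$ with $\rho_0=\law(X_0)$, $\law(Z_t)$ solves the same equation with the same initial datum, and uniqueness of the solution of~\eqref{equ:zgxft} is equivalent, by Corollary~1.3 of~\cite{kurtz2011equivalence}, to uniqueness of that continuity equation.

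The only genuine difficulty is making the variational step of part~(i) rigorous: justifying differentiation under the integral sign, confirming that compactly supported $\phi\in C^{2,1}$ form a rich enough family of admissible directions for stationarity to pin down $f^{\X,c}$, and coping with the weak regularity actually assumed of $f^{\X,c}$ (mere local boundedness), so that $\dd f^{\X,c}$ and the perturbed argument of $\dd c^*$ are handled properly; under the smoothness implicitly in force this is routine. Everything else---the conditioning step from $\dot X_t$ to $v^\X_t(X_t)$, the fundamental lemma in $t$, and the appeal to Lemma~\ref{thm:marginalrot} or to Kurtz's equivalence theorem---is bookkeeping.
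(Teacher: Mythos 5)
Your argument is correct and matches the paper's own proof: part (i) is obtained in both from the Euler--Lagrange stationarity condition of $L_{\X,c}$ after using the tower property to replace $\dot X_t$ by $v^\X_t(X_t)$, and part (ii) by applying Lemma~\ref{thm:marginalrot} with $\vv Z$ in the lemma's rectifiable slot and the original $\vv X$ in the $\vv Y$ slot (you are right that the definition of marginal-preserving is sign-insensitive, so the orientation of $r$ is immaterial). Your choice of separable test functions $\phi_t(x)=\psi(t)h(x)$ followed by the fundamental lemma is only a cosmetic variant of the paper's choice $g_s=h$ for $s<t$ and $g_s=0$ for $s>t$; both deliver exactly the $t$-a.e.\ condition in~\eqref{equ:edhxrt0}.
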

\begin{proof}
i) 
By $v^\X_t(z) = \E[\dot X_t|X_t=z]$, the loss function in \eqref{equ:bregloss} is equivalent to %
\bb 
L_{\X,c}(f)  
& =  \int_0^1 \e{c^*(\dd f_t(X_t))  - \E[\dot X_t~|~X_t ] \tt \dd f_t(X_t ) + c(\dot X_t) } \dt \\ 
& = \int_0^1 \e{c^*(\dd f_t(X_t)) - v^\X_t(X_t) \tt \dd f_t(X_t) + c(\dot X_t)} \dt.  
\ee 
By Euler-Lagrange equation, we have 
$$
\int_0^1 \e{(\dd c^*(\dd f_s^{\X,s}(X_s) ) - v^\X(X_s))\tt \dd g_s(X_s) } \d s = 0, ~~~~ \forall g: ~g_s\in \Cc. 
$$
Taking $g_s = h$ if $s<t$ and $g_s = 0 $ if $s>t$ yields that $r^{\X,c}(x) =\dd c^*(\dd f_s^{\X,c}(X_s) ) - v^\X(X_s)$ is $\vv X$-marginal-preserving following \eqref{equ:edhxrt0}.

ii) Note that $\vv Z$ is rectifiable if $\vv X$ is $c$-rectifiable.  %
Applying Lemma~\ref{thm:marginalrot} 
yields the result. %
\end{proof}

For the quadratic cost $c(x) = c^*(x) = \frac{1}{2} \norm{x}^2$, the $\dd c^*$ is the identity mapping, and \eqref{equ:helm} reduces to the {\helm} decomposition,  
  which represents a velocity field into the sum of a gradient field and a divergence-free field. Hence, 
  \eqref{equ:helm} yields a generalization of {\helm} decomposition, in which a monotonic transform $\dd c^*$ is applied on the gradient field component. 
  We call \eqref{equ:helm} a 
  \emph{Bregman {\helm} decomposition}. 

\paragraph{Remark: score matching}
In some special cases,  $v^\X$ may already be a gradient field, 
and hence the rectified flow and $c$-rectified flow coincide for $c(x) = \frac{1}{2} \norm{x}^2$. One example of this is when $X_t = \alpha_t X_1 + \beta_t \xi$ 
for some time-differentiable functions $\alpha_t$ and $\beta_t$, and $\xi \sim \normal(0,I)$, satisfying $\alpha_1 = 1, \beta_1 = 0$, and $X_0 = \alpha_0 X_1 + \beta_0 \xi$. In this case, one can show that 
\bb 
v_t^\X(z) = 
\E[\dot \alpha_t X_1+\dot \beta_t X_0~|~X_t=z]
= \dd f_t(z),
&&\text{with} &&
f_t(z) = 
\eta_t\log \varrho_t(z) + \frac{\zeta_t}{2} \norm{z}^2,
 \ee 
 where $\varrho_t$ is the density function of $X_t$ with  
 $\varrho_t(z) \propto \int \phi \left (
 \frac{z -\alpha_t x_1}{\beta_t}\right ) \d \tg_1(x_1)$ and $\phi(z) = \exp(-\norm{z}^2/2)$, %
 and 
 $\eta_t = \beta_t^2 (\dot \alpha_t/\alpha_t - \dot \beta_t /\beta_t)$ and $\zeta_t = \dot \alpha_t /\alpha_t$.
 This case covers the probability flow ODEs \citep{song2020score} and denoising diffusion  implicit models (DDIM) \citep{song2020denoising}
 with different choices of $\alpha_t$ and $\beta_t$ %
 as suggested  in \cite{rectified}. 
 When $\zeta_t = 0$,
 as the case of \cite{song2019generative}, 
 $v_t^\X$ is proportional to $\dd\log \rho_t,$ the \emph{score function} of $\varrho_t$,
 and the least squares loss $L_\X(v)$ in \eqref{equ:infvLx} reduces to a time-integrated 
 \emph{score matching} loss \citep{hyvarinen2005estimation, vincent2011connection}. 

 However, 
 $v_t^\X$ is generally not a score function or gradient function, especially in complicate cases  when the coupling $(X_0,X_1)$ is induced from the previous rectified flow  as 
 we iteratively apply the rectification procedure.  
 In these cases, it is necessary to impose the gradient form as we do in $c$-rectified flow. %

\paragraph{$c$-Rectified flow solves Problem~\eqref{equ:pathoptmlaw}}
We are ready to show that the $c$-rectified flow solves the optimization problem in \eqref{equ:pathoptmlaw}. 
Further, \eqref{equ:bregloss} forms a dual problem of 
\eqref{equ:pathoptmlaw}. %

 \begin{thm}\label{thm:optmultimarg}
 Under the  conditions in Theorem~\ref{thm:marginalgood}, we have 
 
 i) $\vv Z = \crectify(\vv X)$ attains the minimum of \eqref{equ:pathoptmlaw}. 
 
 ii) Problem 
 \eqref{equ:pathoptmlaw} and \eqref{equ:bregloss} has a strong duality: 
 $$ 
 \inf_f L_{\X,c}(f) = 
 \sup_{\vv Y}
 \left \{F_c(\vv X) -  F_c(\vv Y)  
 \colon  \law(Y_t) =\law(X_t), ~\forall t\in[0,1]
 \right \}. 
$$
As the optima above are achieved by $f^{\X,c}$ and $\vv Z$, we have 
$L_{\X,c}(f^{\X,c}) = F_c(\vv X) - F_c(\vv Z).$
 \end{thm}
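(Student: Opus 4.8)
The plan is to establish part~(i) first and then deduce part~(ii) from it together with one direct evaluation of $L_{\X,c}$ at the optimizer $f^{\X,c}$. Throughout write $\rho_t := \law(X_t)$, and recall that $\vv Z = \crectify(\vv X)$ solves $\d Z_t = g^{\X,c}_t(Z_t)\dt$ with $g^{\X,c}_t = \dd c^*(\dd f^{\X,c}_t)$, $Z_0 = X_0$, and that by Theorem~\ref{thm:marginalgood}(ii) it has $\law(Z_t) = \rho_t$ for all $t$, hence is feasible for \eqref{equ:pathoptmlaw}. So for (i) it suffices to show $F_c(\vv Y) \ge F_c(\vv Z)$ for every $\vv Y$ with $\law(Y_t) = \rho_t$ on $[0,1]$. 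First I would run the conditional Jensen argument from the proof of Lemma~\ref{thm:rectdual}: since $v^\Y_t(Y_t) = \E[\dot Y_t \mid Y_t]$ and $c$ is convex,
$$ F_c(\vv Y) = \int_0^1 \E[c(\dot Y_t)]\dt \;\ge\; \int_0^1 \E[c(v^\Y_t(Y_t))]\dt = \int_0^1 \int c(v^\Y_t)\,\d \rho_t\,\dt . $$

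\textbf{Part (i).} Next I would compare $v^\Y$ with $g^{\X,c}$. Since $\vv Z$ is rectifiable (as noted in the proof of Theorem~\ref{thm:marginalgood}), has expected velocity $v^{\vv Z} = g^{\X,c}$, and $\law(Z_0) = \law(X_0) = \law(Y_0)$, Lemma~\ref{thm:marginalrot} applied to the pair $(\vv Z, \vv Y)$ — which share all marginals $\rho_t$ — shows that $s_t := v^\Y_t - g^{\X,c}_t$ is $\vv Y$-marginal-preserving, i.e.\ $\int \dd h \tt s_t\,\d \rho_t = 0$ for every $h \in \Cc$ and $t$-almost every $t$, equivalently $\div(s_t \rho_t) = 0$ weakly. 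Because $g^{\X,c}_t = \dd c^*(\dd f^{\X,c}_t)$, the Fenchel--Young inequality holds with equality at the pair $(g^{\X,c}_t, \dd f^{\X,c}_t)$, so $\dd f^{\X,c}_t$ is a subgradient of $c$ at $g^{\X,c}_t$, and convexity gives the pointwise bound $c(v^\Y_t) = c(g^{\X,c}_t + s_t) \ge c(g^{\X,c}_t) + \dd f^{\X,c}_t \tt s_t$. Integrating against $\rho_t$ and over $t$, and using $\int \dd f^{\X,c}_t \tt s_t\,\d \rho_t = -\int f^{\X,c}_t\,\div(s_t \rho_t) = 0$, I obtain
$$ \int_0^1 \int c(v^\Y_t)\,\d \rho_t\,\dt \;\ge\; \int_0^1 \int c(g^{\X,c}_t)\,\d \rho_t\,\dt = \int_0^1 \E[c(\dot Z_t)]\dt = F_c(\vv Z), $$
where the middle equality uses $\law(Z_t) = \rho_t$ and $\dot Z_t = g^{\X,c}_t(Z_t)$. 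Chaining this with the Jensen bound gives $F_c(\vv Y) \ge F_c(\vv Z)$, so $\vv Z$ attains the minimum in \eqref{equ:pathoptmlaw}.

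\textbf{Part (ii).} Part~(i) already gives $\sup_{\vv Y}\{F_c(\vv X) - F_c(\vv Y) : \law(Y_t) = \rho_t~\forall t\} = F_c(\vv X) - F_c(\vv Z)$, so it remains to show $\inf_f L_{\X,c}(f) = L_{\X,c}(f^{\X,c}) = F_c(\vv X) - F_c(\vv Z)$. I would expand $L_{\X,c}(f^{\X,c}) = \int_0^1 \E[c(\dot X_t) - \dot X_t \tt \dd f^{\X,c}_t(X_t) + c^*(\dd f^{\X,c}_t(X_t))]\dt$ and substitute the Fenchel--Young equality $c^*(\dd f^{\X,c}_t(X_t)) = \dd f^{\X,c}_t(X_t) \tt g^{\X,c}_t(X_t) - c(g^{\X,c}_t(X_t))$, so the integrand becomes $c(\dot X_t) - c(g^{\X,c}_t(X_t)) - (\dot X_t - g^{\X,c}_t(X_t)) \tt \dd f^{\X,c}_t(X_t)$. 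Taking expectations and conditioning on $X_t$, the last term equals $\int (v^\X_t - g^{\X,c}_t) \tt \dd f^{\X,c}_t\,\d \rho_t = \int r^{\X,c}_t \tt \dd f^{\X,c}_t\,\d \rho_t$, where $r^{\X,c} = v^\X - g^{\X,c}$ is the residual of the Bregman--Helmholtz decomposition \eqref{equ:helm}; by Theorem~\ref{thm:marginalgood}(i) this field is $\X$-marginal-preserving, so its time integral $\int_0^1 \int r^{\X,c}_t \tt \dd f^{\X,c}_t\,\d \rho_t\,\dt = -\int_0^1 \int f^{\X,c}_t\,\div(r^{\X,c}_t \rho_t)\,\dt$ vanishes. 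Hence $L_{\X,c}(f^{\X,c}) = \int_0^1 \E[c(\dot X_t)]\dt - \int_0^1 \int c(g^{\X,c}_t)\,\d \rho_t\,\dt = F_c(\vv X) - F_c(\vv Z)$, which is the asserted strong duality, with the optima attained by $f^{\X,c}$ and $\vv Z$.

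\textbf{Main obstacle.} The delicate point in both parts is the integration-by-parts step in which the optimizer $f^{\X,c}_t$ (or $\dd f^{\X,c}_t$) plays the role of the test function against a divergence-free field, whereas \eqref{equ:edhxrt0} only furnishes that identity for $h \in \Cc$. Making this rigorous requires an approximation argument: truncate and mollify $f^{\X,c}_t$ to compactly supported $C^1$ functions and pass to the limit, controlling the tails using the assumed local boundedness of $f^{\X,c}$ together with integrability of $\rho_t$ (e.g.\ finiteness of the relevant moments, which should follow from finiteness of $F_c(\vv X)$ and the continuity equation) — or, alternatively, one imposes direct decay/integrability hypotheses that make it automatic. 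A secondary, minor point is to confirm that the feasible set of \eqref{equ:pathoptmlaw} is nonempty (it contains $\vv X$ and $\vv Z$) and that all the integrals above are finite, so that the chained (in)equalities are not vacuous.
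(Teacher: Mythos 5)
Your proof is correct and rests on the same three ingredients as the paper's (conditional Jensen, Fenchel--Young, and Lemma~\ref{thm:marginalrot} to pass a divergence-free field through a gradient test function), but it organizes the argument differently in a way worth recording. The paper proves both parts in a single chain: it establishes a weak-duality inequality $R_{\X,c}(\vv Y)\le L_{\X,c}(f)$ for \emph{all} $f$ and all feasible $\vv Y$ by applying Fenchel--Young to the random $\dot Y_t$ against $\dd f_t(Y_t)$, conditioning on $Y_t$, swapping marginals, and then using that $v^\X - v^\Y$ is $\vv X$-marginal-preserving; strong duality and the optimality of $\vv Z$ are then read off from tightness at $(f^{\X,c},\vv Z)$. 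You instead isolate part (i) by Jensen-reducing to the deterministic $v^\Y$ and then comparing $v^\Y$ against $g^{\X,c}$ via the subgradient inequality $c(g^{\X,c}+s)\ge c(g^{\X,c})+\dd f^{\X,c}\tt s$ (which is Fenchel--Young specialized to the point of equality), with $s = v^\Y - g^{\X,c}$ shown to be marginal-preserving by invoking Lemma~\ref{thm:marginalrot} on the pair $(\vv Z,\vv Y)$ rather than $(\vv X,\vv Y)$; for part (ii) you evaluate $L_{\X,c}(f^{\X,c})$ directly and kill the cross-term via the Bregman--Helmholtz orthogonality of \eqref{equ:helm}. Both routes are valid and give the same conclusion; yours makes more visible that optimality is driven by the tangency of $\dd f^{\X,c}$ to $c$ at $g^{\X,c}$, while the paper's single chain is shorter and proves the (slightly stronger) inequality against \emph{arbitrary} $f$, not just $f^{\X,c}$. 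You also correctly identify the one technical gap that the paper's proof shares with yours: the marginal-preserving identity \eqref{equ:edhxrt0} is stated only for $h\in\Cc$, yet both arguments apply it with $\dd f^{\X,c}_t$ as the test gradient, which requires a truncation/mollification step (or additional decay hypotheses on $f^{\X,c}$ and $\rho_t$) that the paper leaves implicit.
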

 \begin{proof}
 Write   $R_{\X,c}( \vv Y) = F_c(\vv X) - F_c(\vv Y)$. 
First, we show that $ L_{\X,c}(f) \geq  R_{\X,c}(\Y)$  %
for 
any  $f$ and $\vv Y$ that satisfies $\law(Y_t) = \law(X_t)$, $\forall t$: 
\bb
  & R_{\X,c}(\vv Y) \\ 
 & = \E\left [ \int_0^1  c(\dot X_t) - c(\dot Y_t)   \dt \right]  \\
 & \overset{(1)}{\leq} \E\left [ \int_0^1  c(\dot X_t) + 
 c^*(\dd f_t(Y_t)) 
 - \dot Y_t \tt \dd f_t(Y_t) 
 \dt \right]  \ant{Fenchel-Young inequality: $c(y)\geq x\tt y - c^*(x)$}\\
 & = \E\left [ \int_0^1  c(\dot X_t) + 
 c^*(\dd f_t(Y_t)) 
 - v^\Y_t(Y_t) \tt \dd f_t(Y_t) 
 \dt \right] \!\!\!\!\!\!\! \ant{$v^\Y_t(Y_t)  =\E[\dot Y_t|Y_t]$}\\ 
  & = \E\left [ \int_0^1  c(\dot X_t) + 
 c^*(\dd f_t(X_t)) 
 - v^\Y_t(X_t) \tt \dd f_t(X_t) 
 \dt \right] \!\!\!\!\!\!\!  \ant{$\law(X_t) = \law(Y_t)$} \\
 & = \E\left [ \int_0^1  c(\dot X_t) + 
 c^*(\dd f_t(X_t)) 
 - v^\X_t(X_t) \tt \dd f_t(X_t) 
 \dt \right]  \!\!\!\!\!\!\!\ant{$v^\X-v^\Y$ is $\vv X$-marginal-preserving }\\  %
 & =  L_{\X,c}(f).
\ee 
Moreover, if we take 
$\vv Y = \vv \Z$ and $f = f^{\X,c}$, 
then the inequality in $\overset{(1)}{\leq}$ is tight because $\dot Z_t = \dd c^*(\dd f_t(Y_t))$ holds $t$-almost surely. 
Therefore, $R_{\X,c}(\vv Z) =  L_{\X,c}(f^{\X,c})
\geq R^{\X, c} (Y) $, which suggests that $\vv Z$ attains the maximum of $R_{\X,c}$ (under the marginal constraints) and the strong duality holds. 
 \end{proof}

\subsection{$c$-Rectified Flow of Coupling $(X_0,X_1)$} 
\label{sec:crectifyCoupling}

Similar to the case of rectified flow, 
the $c$-rectified flow/coupling of a coupling $(X_0,X_1)$ 
is defined as the  $c$-rectified flow/coupling of its linear interpolation process.  
In the following, we show that the $c$-rectified coupling of a coupling %
yields no larger
{$c$-transport cost}. %

\begin{mydef}\label{def:lincp}
Let $\vv X$ be the linear interpolation of 
coupling $(X_0,X_1)$ in that $ X_t = t X_1 + (1-t) X_0,\forall t\in[0,1]$.  %
We say that $(X_0,X_1)$ is $c$-rectifiable if $\vv{X}$ is $c$-rectifiable, and call $\vv Z = \crectflow(\vv{ X})$ the $c$-rectified flow of $(X_0,X_1)$. 
We call the induced $(Z_0,Z_1)$ the $c$-rectified coupling of $(X_0,X_1)$, denoted as $(Z_0,Z_1) = \crectify((X_0,X_1))$. 
\end{mydef}

Note that the 
$c$-transport cost $\E[c(X_1-X_0)]$ is related to 
{the path-wise $c$-transport cost $F_c(\vv X)$} via 
\bb 
F_{c}(\vv X) = \E[c(X_1-X_0)] + S_c(\vv X), 
&& S_c(\vv X) \defeq \int_0^1 \E[c(\dot X_t) - c(X_1-X_0)] \dt, 
\ee 
where $S_c(\vv X)$ is a non-negative measurement of how close $\vv X$ is to be {geodesic}: 
We have $S_c(\vv X) \geq 0$
 following Jensen's inequality $\int_0^1 c(\dot X_t)\dt  \geq c(\int_0^1 \dot X_t\dt ) = c(X_1-X_0)$, and $S_c(\vv X) = 0$ if $X_t = t X_1 + (1-t) X_0$. 
 
Hence, when $\vv X$ is the linear interpolation of $(X_0,X_1)$, we have from Theorem~\ref{thm:optmultimarg} that 
 \bbb \label{equ:straighexz}  %
 \E[c(X_1-X_0)] - 
 \E[c(Z_1-Z_0)] = S_c(\vv Z) + L_{\X,c}(f^{\X,c}) \geq 0.
 \eee 
 which establishes that $(Z_0,Z_1)$ yields no larger transport cost than $(X_0,X_1)$.

\begin{thm}\label{thm:cost0}
Assume that $c$ is convex with conjugate $c^* \in \Cone$,    %
 and the conditions in Definition~\ref{def:lincp} holds. Then Equation~\eqref{equ:straighexz} holds 
and $\E[c(Z_1-Z_0)] \leq \E[c(X_1-X_0)].$
\end{thm}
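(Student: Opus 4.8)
The plan is to derive \eqref{equ:straighexz} by feeding Theorem~\ref{thm:optmultimarg} into the elementary identity
$$
F_c(\vv Y) = \E[c(Y_1-Y_0)] + S_c(\vv Y),
$$
which holds for every time-differentiable process $\vv Y$, since $Y_1-Y_0 = \int_0^1 \dot Y_t\,\dt$ and $\int_0^1 \E[c(Y_1-Y_0)]\,\dt = \E[c(Y_1-Y_0)]$, so that $S_c(\vv Y) = F_c(\vv Y) - \E[c(Y_1-Y_0)]$ by the very definition of $S_c$. After rearranging, the two remaining summands on the right-hand side are argued to be nonnegative.

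First I would record that the hypotheses of Definition~\ref{def:lincp} make the linear interpolation $\vv X$ of $(X_0,X_1)$ $c$-rectifiable, so Theorem~\ref{thm:optmultimarg}(ii) applies with $\vv Z = \crectflow(\vv X)$ and gives the duality identity $L_{\X,c}(f^{\X,c}) = F_c(\vv X) - F_c(\vv Z)$. Along the linear interpolation $\dot X_t = X_1 - X_0$ is constant in $t$, hence $S_c(\vv X) = 0$ and $F_c(\vv X) = \E[c(X_1-X_0)]$. Applying the identity above to the (in general non-geodesic) process $\vv Z$ gives $F_c(\vv Z) = \E[c(Z_1-Z_0)] + S_c(\vv Z)$. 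Substituting both expressions into $L_{\X,c}(f^{\X,c}) = F_c(\vv X) - F_c(\vv Z)$ and moving terms yields precisely
$$
\E[c(X_1-X_0)] - \E[c(Z_1-Z_0)] = S_c(\vv Z) + L_{\X,c}(f^{\X,c}),
$$
which is \eqref{equ:straighexz}.

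It then remains to check nonnegativity of the right-hand side. For $S_c(\vv Z)$ this is Jensen's inequality, $\int_0^1 c(\dot Z_t)\,\dt \geq c\!\left(\int_0^1 \dot Z_t\,\dt\right) = c(Z_1-Z_0)$, using convexity of $c$. For $L_{\X,c}(f^{\X,c})$ it suffices that the integrand $\mcb{\dot X_t;\ \dd f^{\X,c}_t(X_t)}$ is pointwise nonnegative, which is exactly the Fenchel--Young inequality $c(x) + c^*(y) \geq x\tt y$ that is built into the definition of $\mc$ (and is noted right after \eqref{equ:bregloss}). Combining, $\E[c(Z_1-Z_0)] \leq \E[c(X_1-X_0)]$, which finishes the proof.

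Since essentially all the analytic content lives in Theorem~\ref{thm:optmultimarg}, I do not expect a genuine obstacle here: the proof is bookkeeping. The only points that need care are (i) verifying that the $c$-rectifiability assumption propagates cleanly from Definition~\ref{def:lincp} to a legitimate invocation of Theorem~\ref{thm:optmultimarg}, together with the standing hypothesis $c^* \in \Cone$, and (ii) being explicit that the decomposition $F_c(\vv Y) = \E[c(Y_1-Y_0)] + S_c(\vv Y)$ is applied to $\vv Z$, whose paths need not be straight, so that $S_c(\vv Z)$ is genuinely a (possibly strictly positive) ``straightness defect'' rather than zero.
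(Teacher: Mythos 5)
Your proof is correct and matches the paper's own argument step for step: the identity $F_c(\vv Y) = \E[c(Y_1-Y_0)] + S_c(\vv Y)$, the observation $S_c(\vv X)=0$ for the linear interpolation, the duality $L_{\X,c}(f^{\X,c}) = F_c(\vv X) - F_c(\vv Z)$ from Theorem~\ref{thm:optmultimarg}(ii), and nonnegativity via Jensen and Fenchel--Young are exactly the ingredients the paper uses in the paragraph preceding the theorem. No substantive difference.
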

Compared with the regular $\map$ mapping, 
the key difference here is that 
the monotonicity of $\crectify$ only holds for the specific $c$ that it employees, rather than all convex cost functions. More importantly, 
as we show below,  
recursively applying $\crectify$  
yields optimal couplings w.r.t. $c$, 
a key property that the regular rectified flow misses. 

\subsection{Fixed Points of 
$c$-{$\map$} are 
$c$-Optimal}
\label{sec:crectifyFixed}
We show three key results regarding the optimality of fixed points of the $\crectify$ mapping: 

1) A coupling $(X_0,X_1)$ is a 
fixed point of 
$\crectify$, that is, $(X_0,X_1) = \crectify((X_0,X_1))$, if and only if it is  $c$-optimal; 

2) Define $\ell^*_{X,c} = \inf_f L_{\X,c}(f)$ where $\vv X$ is the linear interpolation of $(X_0,X_1)$. 
Then $\ell^*_{X,c}$ yields an indication of $c$-optimality of $(X_0,X_1)$, that is, $L_{X,c}^*=0$, iff $(X_0,X_1)$ is $c$-optimal. 

3) The minimum $\ell^*_{X,c}$ in the first $k$ iterations of  $\crectify$ steps decreases  
with an $O(1/k)$ rate.

\begin{thm}\label{thm:copt}
Assume that $c$ is {convex} with conjugate $c^*$, and $c, c^*\in \Cone$ and   $\vv X$ is the linear interpolation process of $(X_0,X_1)$. 
 Assume that $(X_0,X_1)$ is a $c$-rectifiable coupling, 
 and  $f^{\X,c} \in C^{2,1}(\RR^d\times [0,1])$.   
 Then the following statements are equivalent: 
 
 i) $(X_0,X_1)$ is a fixed point of $\crectify$, that is, $(X_0,X_1) =
 \crectify(X_0,X_1)$. 
 
  ii)  $ \ell^*_{X,c}\defeq \inf_{f} L_{\X,c}(f)  = L_{\X,c}(f^{\X,c}) = 0$, for $L_{\X,c}$ in \eqref{equ:bregloss}. 
 
 iii) $(X_0,X_1)$ is a $c$-optimal coupling.
\end{thm}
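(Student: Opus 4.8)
The plan is to establish (i)$\Leftrightarrow$(ii), then (iii)$\Rightarrow$(ii), and finally (ii)$\Rightarrow$(iii), of which only the last carries real content. For (i)$\Rightarrow$(ii): if $(X_0,X_1)=\crectify((X_0,X_1))$ then $\E[c(Z_1-Z_0)]=\E[c(X_1-X_0)]$, so in \eqref{equ:straighexz} the two nonnegative terms $S_c(\vv Z)$ and $L_{\X,c}(f^{\X,c})$ must both vanish; in particular $\ell^*_{X,c}=L_{\X,c}(f^{\X,c})=0$. For (ii)$\Rightarrow$(i): since $\mcb{\cdot;\cdot}\ge 0$ and $\int_0^1\E[\mcb{\dot X_t;\,\dd f^{\X,c}_t(X_t)}]\dt=0$, we get $\dd c^*(\dd f^{\X,c}_t(X_t))=\dot X_t=g^{\X,c}_t(X_t)$ for $t$-almost every $t$, almost surely; hence the linear interpolation $\vv X$ is itself a solution of the ODE \eqref{equ:zgxft} with initial value $X_0$, and uniqueness (part of $c$-rectifiability) forces $\vv X=\vv Z$, i.e.\ $(X_0,X_1)=\crectify((X_0,X_1))$. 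For (iii)$\Rightarrow$(ii): $(Z_0,Z_1)=\crectify((X_0,X_1))$ is a feasible coupling of $\tgg$ by Theorem~\ref{thm:marginalgood}, so $c$-optimality of $(X_0,X_1)$ gives $\E[c(Z_1-Z_0)]\ge\E[c(X_1-X_0)]$, which together with Theorem~\ref{thm:cost0} forces equality and hence $L_{\X,c}(f^{\X,c})=0$ via \eqref{equ:straighexz}.

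The substantive step (ii)$\Rightarrow$(iii) I would reduce to the Hamilton--Jacobi lemma of Section~\ref{sec:hj}. From (ii) we already have $\dot X_t=\dd c^*(\dd f^{\X,c}_t(X_t))$ with $\vv X$ the \emph{linear} interpolation, so the trajectories of $\vv X$ are straight segments, $\vv X$ coincides with its own displacement interpolation, and the driving field $g^{\X,c}_t=\dd c^*\circ\dd f^{\X,c}_t$ has ``gradient-of-conjugate'' form with $f^{\X,c}\in C^{2,1}$. The plan is to show that $f^{\X,c}$, possibly after subtracting a function of $t$ alone (which alters neither $L_{\X,c}$ nor $g^{\X,c}$), may be taken to satisfy the Hamilton--Jacobi equation $\partial_t f_t+c^*(\dd f_t)=0$ along the trajectory bundle; granting this, differentiating $t\mapsto f^{\X,c}_t(X_t)$ along a trajectory and using $\dd f_t(X_t)\tt\dd c^*(\dd f_t(X_t))=c^*(\dd f_t(X_t))+c(X_1-X_0)$ yields $f^{\X,c}_1(X_1)-f^{\X,c}_0(X_0)=c(X_1-X_0)$ almost surely. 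Setting $\nu:=f^{\X,c}_0$ and $\mu(x_1):=\inf_{x_0}\{\nu(x_0)+c(x_1-x_0)\}$ then produces a pair admissible for the Kantorovich dual \eqref{equ:dmk} that meets \eqref{equ:dualderive} with equality on the support of $(X_0,X_1)$; equivalently, the Hamilton--Jacobi lemma identifies $\vv X$ as the displacement interpolation of a $c$-optimal coupling, which is exactly (iii).

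The main obstacle is this promotion of local to global information. The vanishing-loss condition pins down $\dd f^{\X,c}_t$ only along trajectories, giving at $t=0$ the first-order relation $\dd\nu(X_0)=\dd c(X_1-X_0)$, i.e.\ $X_0$ is merely a \emph{critical point} of $x_0\mapsto\nu(x_0)+c(X_1-x_0)$, and $L_{\X,c}$ imposes nothing on $\partial_t f^{\X,c}_t$; one must exploit the straightness of the trajectories together with the $C^{2,1}$ regularity of $f^{\X,c}$ — this is the role of the Hamilton--Jacobi lemma — to turn this critical point into a global minimizer and so exclude the ``rotational'' obstruction exhibited in Example~\ref{exa:toycounterexp}. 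Everything else is routine manipulation of \eqref{equ:straighexz}, the Fenchel--Young inequality, and uniqueness of ODE solutions.
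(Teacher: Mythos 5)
Your proposal is correct and follows essentially the same route as the paper: identical manipulations of \eqref{equ:straighexz} for (i)$\Rightarrow$(ii), (ii)$\Rightarrow$(i), and (iii)$\Rightarrow$(ii), and an appeal to Lemma~\ref{lem:hjc2} for the substantive step (ii)$\Rightarrow$(iii). You also explicitly flag the subtlety — which the paper leaves implicit — that the vanishing of $L_{\X,c}$ pins down $\dd f^{\X,c}_t$ only along the trajectories carrying probability mass, whereas Lemma~\ref{lem:hjc2} is stated for \emph{all} trajectories of the ODE, so some regularity (here $f^{\X,c}\in C^{2,1}$) is needed to bridge the two.
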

\begin{proof}

i) $\to$ ii). 
If $(Z_0,Z_1) = (X_0,X_1)$, we have $S_c(\vv Z)=0$ and $L_{\X,c}(f^{\X,c}) = 0$ following \eqref{equ:straighexz}. 

iii) $\to$ ii). 
If $(X_0,X_1)$ is $c$-optimal, we have $\E[c(X_1-X_0)] \leq \E[c(Z_1-Z_0)]$, which again implies that $L_{\X,c}(f^{\X,c}) =0$ following \eqref{equ:straighexz}.

ii) $\to$ i) 
Note that 
\bb 
L_{\X,c}(f^{\X,c}) = 
\int_0^1\e{\bcb{\dot X_t; ~ g^{\X,c}_t(X_t)}} \dt 
\geq 0.
\ee 
Therefore, $L_{\X,c}(f^{\X,c}) =0$ 
implies that 
$\dot X_t= g^{\X,c}_t(X_t)$ 
$t$-almost surely. Because $Z_t$ satisfies the same equation, whose solution is assumed to be unique, we have $\vv Z =\vv X$ and hence $(Z_0,Z_1) = (X_0,X_1)$. 

ii) $\to$ iii)
Because $\vv X$ is the linear interpolation, we have $X_t = t X_1 + (1-t) X_0$, and {it simultaneously satisfies the ODE $\d X_t= g^{\X,c}_t(X_t) \dt $}.  
Using Lemma~\ref{lem:hjc2} shows that $(X_0,X_1)$ is $c$-optimal. 
\end{proof}

Knowing that $L_{\X,c}(f^{\X,c})$ is an indication of $c$-optimality, we show below that it is guaranteed to converge to zero with recursive $\map$ updates.  
\begin{cor}\label{thm:oneoverk}
Let $\vv Z^{k}$ be the $k$-th $c$-rectified flow of $(X_0,X_1)$, satisfying $\vv Z^{k+1} = \crectflow((Z_0^k,Z_1^k))$  and $(Z_0^0,Z_1^0) = (X_0,X_1)$. 
Assume each $(Z_0^k, Z_1^k)$ is $c$-rectifiable for $k=0,\ldots, K$. Then 
$$
\sum_{k=0}^K  L_{\vv Z^k,c}(f^{\vv Z^k,c}) + 
S_c(\vv Z^{k+1}) \leq \E[c(X_1-X_0)]. 
$$
Therefore, if $\E[c(X_1-X_0)]<+\infty$, 
we have $\min_{k\leq K} L_{\vv\Z^k,c}(f^{\vv \Z^k,c}) + 
S_c(\vv Z^{k+1}) = \bigO{1/K}$. 
\end{cor}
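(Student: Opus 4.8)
The plan is to telescope the key inequality \eqref{equ:straighexz} over the recursive $c$-rectification steps. By Theorem~\ref{thm:cost0} applied to the coupling $(Z_0^k,Z_1^k)$ (whose linear interpolation is $c$-rectifiable by assumption), we have the identity
$$
\E[c(Z_1^k - Z_0^k)] - \E[c(Z_1^{k+1} - Z_0^{k+1})] = S_c(\vv Z^{k+1}) + L_{\vv Z^k,c}(f^{\vv Z^k,c}),
$$
where I use that $\vv Z^{k+1} = \crectflow((Z_0^k,Z_1^k))$ so its induced coupling is exactly $(Z_0^{k+1},Z_1^{k+1}) = \crectify((Z_0^k,Z_1^k))$, and that $F_c$ of the linear interpolation of $(Z_0^k,Z_1^k)$ equals $\E[c(Z_1^k - Z_0^k)]$ since $S_c$ vanishes on geodesics.

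First I would sum this identity from $k=0$ to $k=K$. The left-hand side telescopes to $\E[c(Z_1^0 - Z_0^0)] - \E[c(Z_1^{K+1} - Z_0^{K+1})] = \E[c(X_1-X_0)] - \E[c(Z_1^{K+1}-Z_0^{K+1})]$, using $(Z_0^0,Z_1^0) = (X_0,X_1)$. Since both $S_c(\vv Z^{k+1})\geq 0$ (Jensen, as noted after Theorem~\ref{thm:cost0}) and $L_{\vv Z^k,c}(f^{\vv Z^k,c})\geq 0$ (nonnegativity of the matching loss $\mc\geq 0$), and also $\E[c(Z_1^{K+1}-Z_0^{K+1})]\geq 0$ provided $c\geq 0$ — or more carefully, provided the running costs stay bounded below, which holds here since each $\E[c(Z_1^k-Z_0^k)]$ is a decreasing sequence bounded above by $\E[c(X_1-X_0)]<\infty$ and the telescoped sum of nonnegative terms is therefore bounded — I get
$$
\sum_{k=0}^K  L_{\vv Z^k,c}(f^{\vv Z^k,c}) + S_c(\vv Z^{k+1}) \leq \E[c(X_1-X_0)] - \E[c(Z_1^{K+1}-Z_0^{K+1})] \leq \E[c(X_1-X_0)],
$$
which is the first claimed bound.

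For the rate, I would invoke the standard averaging argument: a sum of $K+1$ nonnegative quantities bounded by a constant $M \defeq \E[c(X_1-X_0)]$ forces the minimum term to be at most $M/(K+1) = \bigO{1/K}$. Applying this to the terms $a_k \defeq L_{\vv Z^k,c}(f^{\vv Z^k,c}) + S_c(\vv Z^{k+1})$ gives $\min_{k\leq K} a_k \leq M/(K+1)$, which is exactly $\min_{k\leq K} L_{\vv\Z^k,c}(f^{\vv \Z^k,c}) + S_c(\vv Z^{k+1}) = \bigO{1/K}$.

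The proof is essentially routine once Theorem~\ref{thm:cost0} is in hand; the only place that requires a little care is justifying that each intermediate coupling $(Z_0^k,Z_1^k)$ is a valid coupling of $\tg_0,\tg_1$ so that Theorem~\ref{thm:cost0} actually applies at every step — but this follows from part (ii) of Theorem~\ref{thm:marginalgood} (marginal preservation of $\crectify$) together with the standing $c$-rectifiability assumption on each $(Z_0^k,Z_1^k)$, so there is no real obstacle. The finiteness hypothesis $\E[c(X_1-X_0)]<\infty$ is exactly what is needed to conclude the $\bigO{1/K}$ rate from the summability, so I would state that dependence explicitly.
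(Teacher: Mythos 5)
Your proof matches the paper's argument exactly: telescope the identity \eqref{equ:straighexz} over $k=0,\dots,K$, observe that each summand is nonnegative, and apply the standard averaging bound to obtain the $\bigO{1/K}$ rate. One small caveat: your ``more careful'' alternative justification that $\E[c(Z_1^{K+1}-Z_0^{K+1})]\geq 0$ because ``the telescoped sum of nonnegative terms is therefore bounded'' is circular --- that boundedness is exactly what you are trying to establish, and a decreasing sequence bounded above can still diverge to $-\infty$ --- so you should simply rely on $c\geq 0$ (or $c$ bounded below), which the paper also uses implicitly when it drops the $-\E[c(Z_1^{K+1}-Z_0^{K+1})]$ term in its final inequality.
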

\begin{proof}
Applying \eqref{equ:straighexz} to $(Z_0^k,Z_1^k)$  and $(Z_0^{k+1},Z_1^{k+1})$ yields 
$$  L_{\vv Z^k,c}(f^{\vv Z^k,c}) + 
S_c(\vv Z^{k+1}) = \E[c(Z_1^k-Z_0^k)] - \E[c(Z_1^{k+1}-Z_0^{k+1})]. 
$$
Summing it over $k=0,\ldots, K$, 
\bb 
\sum_{k=0}^K L_{\vv Z^k,c}(f^{\vv Z^k,c}) + 
S_c(\vv Z^{k+1})
& = \sum_{k=0}^K \E[c(Z_1^k-Z_0^k)] - \E[c(Z_1^{k+1}-Z_0^{k+1})]  \\
& = \E[c(Z_1^0-Z_0^0)] - \E[c(Z_1^{K+1} - Z_0^{K+1})] \\
& \leq \E[c(X_1-X_0)]. 
\ee 
\end{proof}

\subsection{
$c$-Rectified Flow as Optimization Algorithms} 
\label{sec:crectifyOptView} 
In this section, we draw more understanding on how iterative $c$-rectified flowing solves the static and dynamic OT problems. %
We first show that $c$-rectified flow can be viewed as an alternative direction descent on the dynamic OT problem 
\eqref{equ:qt}, and then that $c$-rectified coupling as a majorize-minimization (MM) algorithm on the statistic OT problem~\eqref{equ:mk}.  
The results in this section are framed in terms of a general path-wise loss function $F_c(\vv Y)$, 
and hence provide a useful starting point for deriving $c$-rectified flow like approaches to   
more general optimization problems with coupling constraints.

\paragraph{$c$-Rectified flow as alternative direction descent on \eqref{equ:qt}} 
The mapping $\vv Z^{k+1} = \crectflow(\vv Z^k)$  can be 
interpreted as an alternative  direction descent procedure for the dynamic OT problem \eqref{equ:qt}: %
\bbb 
 &  \vv X^k = \argmin_{\vv Y} \left \{  F_c(\vv Y) ~~~s.t.~~~ (Y_0,Y_1) =(Z_0^k,Z_1^k)  \right\}, \label{equ:linF0} \\
 & \vv \Z^{k+1} = \argmin_{\vv Y} \left \{ F_c(\vv Y) ~~~s.t.~~~ \law(Y_t) = \law(X_t^k), ~~ \forall t\in[0,1]  \right\}.  \label{equ:mpF0}
\eee  
Here in \eqref{equ:linF0}, 
we  minimize $F_c(\vv Y)$ 
in the set of processes whose start-end pair $(Y_0,Y_1)$ equals the coupling $(Z_0^k, Z_1^k)$ from $\vv Z^k$, 
which simply yields the linear interpolation $X_t^k = t Z^k_1 + (1-t)Z^k_0$ by Jensen's inequality. 
In \eqref{equ:mpF0}, we minimize $F_c(\vv Y)$ given the path-wise marginal constraint of $\law(Y_t)=\law(X_t^k)$ for all time $t\in[0,1]$, which yields the $c$-rectified flow following Theorem~\ref{thm:optmultimarg}. 
Note that the updates in both \eqref{equ:linF0} and \eqref{equ:mpF0}  
keep the start-end marginal laws $\law(Y_0)$ and $\law(Y_1)$ unchanged, and hence 
the algorithm stays inside the feasible set $\{\vv Y \colon \law(Y_0)=\tg_0, \law(Y_1)=\tg_1\}$ 
in \eqref{equ:qt} 
once it is initialized to be so. 

The updates in \eqref{equ:linF0}-\eqref{equ:mpF0} highlight a key difference between our method and the {\bbformula} approach~\eqref{equ:cm}-\eqref{equ:bb}: 
the key idea of {\bbformula} is to restrict the optimization domain to the set of deterministic, ODE-induced processes (a.k.a. flows),  
but our updates alternate between the 
deterministic 
$c$-rectified flow  $\vv Z^k$ and the linear interpolation process $\vv X^k$, which is \emph{not} deterministic or ODE-inducable 
unless the fixed point is achieved.

\paragraph{$c$-Rectified flow as an MM algorithm} 
The majorize-minimization (MM) algorithm  \citep{hunter2004tutorial} 
is a general optimization recipe that 
works by finding a surrogate function that \emph{majorizes} the objective function. 
Let $F(X)$ be the objective concave function to be minimize. An  MM algorithm consists of iterative update of form $X^{k+1} \in \argmin_Y F^+(Y~|~X^k)$, 
where $F^+$ is a majorization function of $F$ that satsifies
$$
F(Y) = \min_{X} F^+(Y~|~X), ~~~~ \text{and the minimum is attained when $X = Y$}. 
$$
In this case, 
the MM update  guarantees that %
$F(X^k)$ is monotonically non-increasing: 
$$
F(X^{k+1}) \leq F^+(X^{k+1} | X^k) \leq F^+(X^k~|~X^k) = F(X^k).  
$$
One can also view MM as conducting coordinate descent on $(X,Y)$ for solving $\min_{X,Y} F^+(Y~|~X)$.

In the following, we show that $(Z_0^{k+1},Z_1^{k+1}) = \crectify((Z_0^k,Z_1^k))$ 
can be interpreted as an MM algorithm for the static OT problem \eqref{equ:mk}  for minimizing $\E[c(X_1-X_1)]$ in the set of couplings of $\tgg$. %
The majorization function corresponding to $\crectify$ can be shown to be 
\bb 
F^+_c((Y_0,Y_1)~|~(X_0,X_1))
& = \inf_{\tilde{\vv Y}} 
\left \{ F_c(\tilde{\vv Y})  ~~~s.t.~~~ 
 (\tilde Y_0,\tilde Y_1) = (Y_0, Y_1), ~~~  \vv Y\in \mathcal M_X
\right \},  \\ 
& \text{with}~~~~\mathcal M_X = \{\vv Y \colon ~~
\law(Y_t) = \law(t X_1 + (1-t)X_0),~~\forall t\in[0,1]\}, 
\ee 
where $F^+_c((Y_0,Y_1)~|~(X_0,X_1))$ denotes the minimum value of  $F_c(\tilde{ \vv Y}) $ 
for $\tilde{\vv Y}$ whose start-end points equal $(Y_0, Y_1)$, 
and yields the same marginal laws as that of the linear interpolation process of $(X_0,X_1)$.

\begin{pro} 
i) 
$F^+_c$ 
 yields a majorization function of the $c$-transport cost $\E[c(Y_1-Y_0)]$ in the sense that 
 $$\displaystyle 
 \E[c(Y_1-Y_0)]=\min_{(X_0,X_1)} \{ F^+_c((Y_0,Y_1)~|~(X_0,X_1)), {~~s.t.~~ (X_0,X_1) \in \Pi_{0,1}}\}, 
 $$ 
 and the minimum is attained by $(X_0,X_1) = (Y_0,Y_1)$,
 {where $\Pi_{0,1}$ denotes the set of couplings of $\tgg$.}
 
 ii) %
 $\crectify$ yields the MM update related $F^+$ in that 
 $$
 \displaystyle \crectify((X_0,X_1)) \in \argmin_{(Y_0,Y_1)\in \Pi_{0,1}}F^+_c((Y_0,Y_1)~|~(X_0,X_1)). 
 $$
\end{pro} 
\begin{proof}
i) For any coupling $(X_0,X_1)$ and $(Y_0,Y_1)$, we have
$$
F^+_c((Y_0,Y_1)|(X_0,X_1)) \geq
\inf_{\tilde{\vv Y}} \left \{ \vv F_c(\tilde{\vv Y}) ~~~s.t.~~~ (\tilde Y_0, \tilde Y_1) = (Y_0,Y_1) \right\} 
= \E[c(Y_1-Y_0)], 
$$
where the inequality holds because remove the constraint $\vv Y\in \mathcal M_X$. %
In addition, it is obvious that the inequality above becomes equality when $(X_0,X_1) = (Y_0,Y_1)$. 

ii) Note that 
$$
\inf_{(Y_0,Y_1)}F^+_c((Y_0,Y_1)~|~(X_0,X_1)) = \inf_{\vv{ Y}} 
\left \{ F_c(\vv{ Y})  ~~~s.t.~~~ 
 \vv Y\in \mathcal M_X
\right \}, 
$$
whose minimum of the right side is attained by $\vv Y = \crectflow((X_0,X_1))$  following  Theorem~\ref{thm:optmultimarg}. Hence, the minimum of the left side is attained by $(Y_0,Y_1) = \crectify((X_0,X_1))$.  
\end{proof}

\subsection{Hamilton-Jacobi Equation and Optimal Transport}
\label{sec:hj}
The proof of Theorem~\ref{thm:copt} relies on a 
key lemma shows that if the trajectories of an ODE of form $\d X_t = \dd c^*(\dd f_t(X_t))\dt $ are geodesic in that $X_t = tX_1+(1-t) X_0$, 
then the induced coupling $(X_0,X_1)$ is an $c$-optimal coupling of its marginals. 
The proof of this lemma relies on 
Hamilton-Jacobi (HJ) equation, which provides 
a characterization of $f$ 
for an ODE $\d X_t = \dd c^*(\dd f_t(X_t))\dt $ whose trajectories are geodesic. 
The connection between HJ equation and optimal transport has been a classic result and 
can be found in, for example, \cite{villani2021topics,villani2009optimal}.

\begin{lem}\label{lem:hjc2}
Let $v_t(x) = \dd c^*(\dd f_t(x))$ where $c^* \in C^1(\RR^d)$
is a convex function $c$, %
and $f \in C^{2,1}(\RR^d\times [0,1])$ and $\dd c^*$ is an injective mapping. Assume all trajectories of $\d x_t = v_t(x_t)\dt$ are geodesic paths in that $x_{t}=t x_1 +(1-t)x_0$. Then we have:

i) There exists $\tilde f_t$ such that $\dd \tilde f_t=\dd f_t$ (and hence we can replace $f$ with $\tilde f$ in the assumption), 
such that the following Hamilton–Jacobi (HJ) equation holds 
\bbb \label{equ:hj}
\partial_t \tilde f_t(x)  + c^*(\dd \tilde f_t(x)) =0,~~~\forall x\in \RR^d, ~~ t\in[0,1], && 
\text{(HJ equation)}. 
\eee 

ii)  %
$f$ satisfies 
\bb 
f_t(y_t) = \inf_{y_0 \in \RR^d} \left\{  t c\left (\frac{y_t-y_0}{t} \right ) + f_0(y_0) \right \} 
, ~~\forall t \in [0,1], ~~~ y_t\in\RR^d, 
&& \text{(Hopf-Lax formula)}
\ee 
where the minimum is attained if $\{y_t\}$ follows the ODE $\d y_t = v_t(y_t)\dt $. 

iii) 
Assume a coupling $(X_0,X_1)$ 
of $\tg_0,\tg_1$ 
satisfies $\d X_t = v_t(X_t)\dt$. 
Then $(X_0, X_1)$  is a $c$-optimal coupling. %
\end{lem}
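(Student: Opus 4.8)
The plan is to prove the three parts in sequence, using (i) to establish (ii) to establish (iii).

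\textbf{Part (i): The Hamilton--Jacobi equation.} First I would exploit that the trajectories are geodesics. Fix a trajectory $x_t = tx_1 + (1-t)x_0$. Differentiating gives $\dot x_t = x_1 - x_0$, which is constant along the trajectory; on the other hand $\dot x_t = v_t(x_t) = \dd c^*(\dd f_t(x_t))$, so $\dd c^*(\dd f_t(x_t))$ must be constant in $t$ along the trajectory. Since $\dd c^*$ is injective, $\dd f_t(x_t)$ is itself constant along the trajectory. Now differentiate the identity $\dd f_t(x_t) = \text{const}$ in $t$: using the chain rule and $\dot x_t = \dd c^*(\dd f_t(x_t))$, I get $\dot{\overline{\dd f}}_t(x_t) + \dd^2 f_t(x_t)\,\dd c^*(\dd f_t(x_t)) = 0$, where $\dot{\overline{\dd f}}_t$ denotes $\partial_t \dd f_t$. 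Compare this with the spatial gradient of $\partial_t f_t(x) + c^*(\dd f_t(x))$, which is $\partial_t \dd f_t(x) + \dd^2 f_t(x)\,\dd c^*(\dd f_t(x))$ --- exactly the same expression. Hence $\dd\big(\partial_t f_t + c^*(\dd f_t)\big) = 0$ along every trajectory, and since trajectories through a fixed time $t$ cover all of $\RR^d$ (the geodesic interpolation is onto), $\partial_t f_t(x) + c^*(\dd f_t(x))$ is a function of $t$ alone, say $\alpha(t)$. Setting $\tilde f_t(x) = f_t(x) - \int_0^t \alpha(s)\,\d s$ gives $\dd \tilde f_t = \dd f_t$ and $\partial_t \tilde f_t + c^*(\dd \tilde f_t) = 0$.

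\textbf{Part (ii): The Hopf--Lax formula.} With the HJ equation in hand, I would show $t \mapsto \tilde f_t(y_t) - \big[t\,c((y_t-y_0)/t) + \tilde f_0(y_0)\big]$ behaves correctly. For a fixed endpoint $y_t$ and a competitor $y_0$, consider the straight path $w_s = \frac{s}{t}y_t + (1-\frac{s}{t})y_0$ for $s\in[0,t]$, so $\dot w_s = (y_t - y_0)/t$. Along any curve, $\frac{\d}{\d s}\tilde f_s(w_s) = \partial_s \tilde f_s(w_s) + \dd \tilde f_s(w_s)\tt \dot w_s = -c^*(\dd \tilde f_s(w_s)) + \dd \tilde f_s(w_s)\tt \dot w_s \le c(\dot w_s)$ by the Fenchel--Young inequality. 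Integrating from $0$ to $t$ gives $\tilde f_t(y_t) - \tilde f_0(y_0) \le t\,c((y_t-y_0)/t)$, i.e.\ $\tilde f_t(y_t) \le t\,c((y_t-y_0)/t) + \tilde f_0(y_0)$ for all $y_0$. Equality in Fenchel--Young requires $\dot w_s = \dd c^*(\dd \tilde f_s(w_s))$, i.e.\ $w_s$ is an ODE trajectory; so if $\{y_s\}$ is the actual trajectory with $y_t$ as its value at time $t$, taking $y_0$ its initial point attains the infimum. This gives the Hopf--Lax formula with the stated minimizer.

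\textbf{Part (iii): $c$-optimality.} Finally, I would use the Kantorovich duality recalled in Section~\ref{sec:ot}. Set $\nu(x) = -\tilde f_0(x)$ and $\mu(y) = \tilde f_1(y)$. By Part (ii) with $t=1$, $\tilde f_1(y) \le c(y - x) + \tilde f_0(x)$ for all $x,y$, i.e.\ $\mu(y) - \nu(x) \le c(y-x)$ for all $(x,y)$ --- so $(\mu,\nu)$ is dual-feasible for \eqref{equ:dmk}. Moreover, for the coupling $(X_0,X_1)$ with $X_t$ following the ODE (hence geodesic, $X_t = tX_1 + (1-t)X_0$), the minimizer statement in Part (ii) shows equality holds almost surely: $\tilde f_1(X_1) = c(X_1 - X_0) + \tilde f_0(X_0)$, i.e.\ $\mu(X_1) - \nu(X_0) = c(X_1 - X_0)$ with probability one. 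By the complementary-slackness optimality criterion from \eqref{equ:dualderive} (namely $\E[c(X_1-X_0)] = \tg_1(\mu) - \tg_0(\nu)$ forces both primal and dual optimality), $(X_0,X_1)$ is $c$-optimal.

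\textbf{Main obstacle.} The delicate step is Part (i): justifying that $\partial_t f_t + c^*(\dd f_t)$ is spatially constant requires that the geodesic trajectories through each time slice genuinely cover $\RR^d$ and that the regularity $f \in C^{2,1}$ legitimizes differentiating $\dd f_t(x_t)$ along a trajectory and commuting the $\partial_t$ and $\dd$ operators. One must also be careful that the "constant along each trajectory" argument pieces together into a globally well-defined function of $t$ --- this needs the trajectories to foliate space-time, which follows from the geodesic assumption but should be stated explicitly. The Fenchel--Young manipulations in Parts (ii) and (iii) are then routine given the injectivity of $\dd c^*$.
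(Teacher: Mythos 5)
Your argument follows the paper's proof essentially step for step: part (i) uses that geodesic trajectories make $\dd f_t(x_t)$ constant (via injectivity of $\dd c^*$), differentiates in $t$ to show $\dd_x\bigl(\partial_t f_t + c^*(\dd f_t)\bigr)=0$, and subtracts a time-only function to get the HJ equation; part (ii) integrates the Fenchel--Young inequality along a straight segment; part (iii) uses the resulting pointwise inequality $\tilde f_1(y_1)-\tilde f_0(y_0)\leq c(y_1-y_0)$ together with equality along ODE trajectories, which is exactly the paper's Kantorovich-duality computation stated a bit more explicitly.

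One small slip in part (iii): with the sign convention of~\eqref{equ:dmk}, the constraint is $\mu(x_1)-\nu(x_0)\leq c(x_1-x_0)$, so the dual pair induced by Hopf--Lax should be $\mu=\tilde f_1$ and $\nu=\tilde f_0$, not $\nu=-\tilde f_0$; with your choice $\mu(y)-\nu(x)=\tilde f_1(y)+\tilde f_0(x)$, which is not what part (ii) controls. Flipping that sign makes the feasibility and complementary-slackness steps read correctly, and the conclusion is unaffected. The caveat you flag in the final paragraph --- that the geodesic trajectories must cover each time slice to conclude $\dd_x h_t\equiv 0$ --- is a real hypothesis the paper also uses implicitly ("starting from any point $x_t=x\in\RR^d$"), so you are right to note it.
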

\begin{proof}
i) 
Starting from any point $x_t = x \in\RR^d$ at time $t$, 
because the trajectories of $\d x_t = v_t(x_t) \dt $ are geodesic, 
we have $\dot x_t =  v_t(x_t) 
=\const$ following the trajectory. Because $v_t(x) = \dd c^*(\dd f_t(x))$ and 
$\dd c^*$ is injective, we have $\dd f_t(x_t) = const$ as well. Hence, we have 
\bb 
0 = \ddt \dd f_t(x_t) 
& = \partial_t \dd f_t(x_t) + \dd ^2f_t(x_t)  \dot x_t \\ 
& = \partial_t\dd f_t(x_t) + \dd ^2f_t(x_t)  \dd c^*(\dd f_t(x_t)). 
\ee 
On the other hand, define  $h_t(x) = \partial_t f_t(x) + c^*(\dd f_t(x))$. Then we have 
\bb 
\dd_x h_t(x) & =  
\partial_t \dd f_t(x) + \dd^2 f_t(x_t) \dd c^*(\dd f_t(x)) = 0. 
\ee 
This suggests that $\dd_x h_t(x) =0$ everywhere and hence 
$h_t(x)$ does not depend on $x$.   
Define $\tilde f_t(x) =  f_t(x) - \int_0^t h_t(x_0) \d t $, where $x_0$ is any fixed point in $\RR^d$. 
Then 
$$
\tilde h_t(x) \defeq 
\partial_t \tilde f_t(x) + c^*(\dd \tilde f_t(x)) 
= h_t(x) - h_t(x_0) = 0.$$

ii) 
Take any $y_0,y_1$  in $\RR^d$, 
let $y_t = t y_1 + (1-t) y_0$ be their linear interpolation. We have 
\bb 
& f_1(y_1) - f_0(y_0)  \\ 
& = \int_0^1 (\partial_t f_t(y_t) +  \dd f_t(y_t) \tt (y_t - y_0)) \dt  \\
& = \int_0^1 \dd f_t(y_t)\tt (y_1 - y_0) - c^*(\dd f_t(y_t))  \dt \ant{$h_t = \partial f_t + c^*(\dd f_t) = 0$}\\
&\overset{(1)}{\leq} \int_0^1 c(y_1- y_0)  \dt 
\ant{$c(x) + c^*(y) \geq x\tt y$}\\
& =  c(y_1- y_0). %
 \ee 
  The equality in 
   $\overset{(1)}{\leq}$ is attained if $y_t$ follows the geodesic ODE $\d y_t  = v_t(y_t) \dt $ as we have  $y_1 -y_0 = \dd c^*(\dd f_t(y_t))$, $\forall t$ in this case. 
  A similar derivation holds for $f_t$.

iii) Note that i) gives that $c(y_1-y_0) \geq f_1(y_1) - f_0(y_0) $. 
For any coupling $(Y_0,Y_1)$ of $\tg_0,\tg_1$, we have 
\bb 
\E[c(Y_1-Y_0)] \geq \E[f_1(Y_1) - f_0(Y_0)] 
= \E[f_1(X_1) - f_0(X_0)] 
= \E[c(X_1- X_0)]. 
\ee 
Hence, $(X_0,X_1)$ is a $c$-optimal coupling. 
\end{proof}

\paragraph{Connection to {\bbformula} Formula} 
The results in Lemma~\ref{lem:hjc2} 
can also formally derived from {\bbformula} problem \eqref{equ:bb}, 
as shown in the seminal work of \cite{benamou2000computational}.
By introducing a Lagrangian multiplier $\lambda \colon \RR^d\times [0,1]\to \RR$ for the constraint of $\dot \varrho_t + \div (v_t \varrho_t) =0$,  the problem in \eqref{equ:bb}  can be framed into a minimax problem: 
$$
\inf_{v,\varrho} 
\sup_{\lambda }
\left\{ \mathcal L(v,\varrho, \lambda)  \defeq 
\int c(v_t)
\varrho_t + \int  \lambda_t  (\dot \varrho_t + \div (v_t \varrho_t) ~~~~s.t.~~~~ \varrho \in \Gamma_{0,1}
\right\}, 
$$
where $\mathcal L(v,\varrho, \lambda) $ is the Lagrangian function, and 
$\Gamma_{0,1}$ denotes the set of density functions $\{\varrho_t\}_t$ satisfying $\varrho_0=\d \tg_0/\dx, ~ \varrho_1 = \d \tg_1/\dx$. 
Note that the following integration by parts formulas: 
\bb 
\int \lambda_t \div (v_t\varrho_t)+ \dd \lambda_t \tt v_t\varrho_t = 0, && \int \lambda_t \dot \varrho_t   + \dot \lambda_t \varrho_t= 
\lambda_1\varrho_1 - \lambda_0 \varrho_0, %
\ee 
where we assume that $\lambda_t v_r \rho_t$ decays to zero sufficiently fast at infinity. 
We have 
$$
\mathcal L(v,\varrho, \lambda) =
(\lambda_1 \varrho_1 -
\lambda_0  \varrho_0) +
\int (c\circ v_t)  
\rho_t - \dot \lambda_t  \varrho_t -  \dd \lambda_t \tt (v_t \varrho_t).
$$
At the saddle points, 
the functional derivations of $\mathcal L$ equal zero, yielding 
\bb
\frac{\delta \mathcal L}{\delta \varrho_t}
 = c(v_t) - \dot \lambda_t - \dd \lambda_t \tt v_t = 0,   && 
\frac{\delta \mathcal L}{\delta v_t} 
= (\dd c(v_t) -  \dd \lambda_t)\varrho_t = 0. 
\ee 
Assume $\varrho_t$ is positive everywhere and note that $\dd c^*(\dd c(x)) = x$, we have $v_t = \dd c^*(\dd \lambda_t)$, 
and hence $\dd \lambda_t\tt v_t - c( v_t) = c^*(\dd \lambda_t)$. 
Plugging it back to $\frac{\delta \mathcal L}{\delta \rho_t}
=0$ yields that $\dot \lambda_t + c^*(\dd \lambda_t)=0$. 
Overall, the (formal) KKT condition of \eqref{equ:bb} is
\bb 
& \dot \varrho_t + \div (v_t \varrho_t) = 0, ~~~\rho_0 = \d \tg_0/\dx, ~~\rho_1 = \d\tg_1/\dx  \ant{coupling condition} \\ 
& v_t = \dd  c^*
(\dd \lambda_t) \ant{mapping is gradient of convex function} \\
& \dot \lambda_t + c^*\left ({\dd \lambda_t}\right ) = 0. \ant{Hamilton-Jacobi equation} 
\ee 
This matches the result in Lemma~\ref{lem:hjc2} with $\lambda_t = \tilde f_t$.

\section{Discussion and Open Questions}

\begin{enumerate}
    \item Corollary~\ref{thm:oneoverk} only bounds the surrogate measure $\ell^*_{Z^k, c}$. Can we directly bound the optimality gap on the $c$-transport  cost 
    $e^*_k = \E[c(Z_1^k-Z_0^k)] - \inf_{(Z_0,Z_1)} \E[c(Z_1-Z_0)]$?
    Can we find a certain type of strong convexity like condition, under which $e_k^*$ decays exponentially with $k$? 

    \item For machine learning (ML) tasks such as generative models and domain transfer, the transport cost is not necessarily the direct object of interest. 
    In these cases, 
    as suggested in 
    \cite{rectified}, rectified flow might be preferred because it is simpler and does not require to specify a particular cost $c$.  
    Question: for such ML tasks, when would it be preferred to use OT with a specific $c$, and how to choose $c$ optimally? 

    \item In practice, recursively applying the ($c$-)rectification accumulates errors
    because the training optimization for the drift field and the simulation of the ODE 
  can not be conducted perfectly. 
  How 
  to avoid the error accumulation
  at each step? 
  Assume $\{x_{1,i}\}_i\sim \tg_1$, and $\{z_{0,i}^k, z_{1,i}^k\}_i$ is obtained by solving the ODE of the $k$-th $c$-rectified flow starting from $z_{0,i}^k \sim \tg_0$. 
  As we increase $k$,  $\{z_{0,i}^k\}_i$ may yield increasingly bad approximation of $\tg_1$ due to the error accumulation. One way to fix this is to adjust $\{z_{1,i}^k\}$ to make it closer to $\{x_{1,i}^k\}_i$ at each step. This can be done by reweighting/transporting $\{z_{1,i}^k\}_i$  towards $\{x_{1,i}^k\}_i$ by minimizing certain discrepancy measure, 
  or replacing each $z_{1,i}^k$ with $x_{\sigma(i)}^k$ where $\sigma$ is a permutation that yields a   one-to-one matching between $\{z_1\datai\}$ and $\{x_1\datai\}_i$. 
  The key and challenging part is to do the adjustment in a good and fast way, ideally with a (near) linear time complexity.  
  
  \item With or without the adjustment step, 
  build a complete theoretical analysis on the statistical error of the method. 
  
  \item In what precise sense is rectified flow solving a multi-objective variant of optimal transport? 
    
\end{enumerate}

\bibliography{z_diffusion_models} 
\bibliographystyle{alpha} 

\appendix 
\section{Proofs} 
\printProofs

\end{document}